\newcommand{\ucb}{\textsc{UCB}}
\newcommand{\UCB}{\textsc{UCB }}
\newcommand{\LFG}{\textsc{LFG }}
\newcommand{\lfg}{\textsc{LFG}}
\newcommand{\smab}{\textsc{MAB}}
\newcommand{\SMAB}{\textsc{MAB }}
\newcommand{\fsmab}{\textsc{Fair-MAB}}
\newcommand{\FSMAB}{\textsc{Fair-MAB }}
\newcommand{\tfucb}{\textsc{T-fair-ucb}}
\newcommand{\TFUCB}{\textsc{T-fair-ucb }}
\newcommand{\naive}{\textsc{Naive}}
\newcommand{\NAIVE}{\textsc{Naive }}
\newcommand{\fucb}{\textsc{Fair-ucb}}
\newcommand{\FUCB}{\textsc{Fair-ucb }}
\newcommand{\falg}{\textsc{Fair-Learn}}
\newcommand{\FALG}{\textsc{Fair-Learn }}
\newcommand{\rRegret}{\text{$r$-Regret }}
\newcommand{\rregret}{\text{$r$-Regret}}
\newcommand{\aFair}{$\alpha$-fair }
\newcommand{\learn}{{\fontfamily{cmss}\selectfont \small \textsc{Learn}}}
\newcommand{\TOL}{\emph{unfairness tolerance }}
\definecolor{DarkBlue}{rgb}{0.1,0.1,0.5}
\definecolor{DarkGreen}{rgb}{0.1,0.5,0.1}
\renewcommand*{\backref}[1]{}
\renewcommand*{\backrefalt}[4]{%
    \ifcase #1 (Not cited.)%
    \or        (Cited on page~#2)%
    \else      (Cited on pages~#2)%
    \fi}
\crefname{property}{Property}{Properties}
\newtheorem{theorem}{Theorem}
\theoremstyle{definition}
\newtheorem{observation}{Observation}
\newtheorem{definition}{Definition}
\theoremstyle{remark}
\DeclarePairedDelimiter{\floor}{\lfloor}{\rfloor}
\title{ Achieving Fairness in the Stochastic Multi-armed Bandit Problem }
\DeclareMathOperator*{\argmax}{argmax}
\begin{document}
\author{Vishakha Patil \thanks{Indian Institute of Science. \texttt{patilv@iisc.ac.in} } \quad Ganesh Ghalme\thanks{Indian Institute of Science.  \texttt{ganeshg@iisc.ac.in}} \quad Vineet Nair \thanks{Indian Institute of Science. \texttt{vineet@iisc.ac.in}}\\  \quad Y. Narahari\thanks{Indian Institute of Science. \texttt{narahari@iisc.ac.in}}}
\maketitle
\begin{abstract}
 We study an interesting variant of the stochastic multi-armed bandit problem, which we call the \FSMAB problem, where, in addition to the objective of maximizing the sum of expected rewards, the algorithm also needs to ensure that at any time, each arm is pulled  at least a pre-specified fraction of times. We investigate the interplay between \emph{learning} and  \emph{fairness} in terms of a pre-specified vector denoting the fractions of guaranteed pulls. We define a  \emph{fairness-aware regret}, which we call \rregret, that takes into account the above fairness constraints and extends  the conventional notion of regret in a natural way. Our primary contribution is to obtain a complete characterization of  a class of \FSMAB algorithms via two parameters: the unfairness tolerance and the learning algorithm used as a black-box. For this class of algorithms, we provide a fairness guarantee that holds uniformly over time, irrespective of the choice of the learning algorithm. Further, when the learning algorithm is \ucb1, we show that our algorithm achieves constant \rRegret for a large enough time horizon. Finally, we analyze the \emph{cost of fairness} in terms of the conventional notion of regret. We conclude by experimentally validating our theoretical results.
 
\end{abstract}
\section{Introduction}
\label{sec: introduction}
The multi-armed bandit (\smab) problem is a classic framework for sequential decision-making in uncertain environments. Starting with the seminal work of Robbins (1952), over the years,  a significant body of work has been developed to address both theoretical aspects and practical applications of this problem; see \cite{bubeck2012,lattimore2018bandit,slivkins2019introduction} for textbook expositions of the \SMAB problem. Indeed, the study of the MAB problem and its numerous variants continues to be a central pursuit in multiple fields such as online learning and reinforcement learning. In the \SMAB setup, at every round a decision maker (an online algorithm) is faced with $k$ choices, which correspond to unknown (to the algorithm) reward distributions. Each choice is referred to as an arm and when the decision maker pulls a specific arm she receives a reward drawn from the corresponding (a priori unknown) distribution\footnote{The arms which are not pulled do not give any reward.}. The goal of the decision maker is to maximize the cumulative reward in expectation accrued through a sequence of arm pulls, i.e. if the process repeats for $T$ rounds then in each round the decision maker selects an arm with the objective of maximizing the total expected reward\footnote{We study the standard setup in which T is not known upfront to the online algorithm.}. 



Several variations of the \SMAB problem have been extensively studied in the literature. Various papers  study \SMAB problems with additional constraints which include bandits with knapsack constraints \cite{BAD13}, bandits with budget  constraints \cite{XIA15}, sleeping bandits \cite{KLE10,CHA17}, etc. In this paper we consider \fsmab, a variant of the \SMAB problem where, in addition to maximizing the cumulative expected reward, the algorithm also needs to ensure that uniformly (i.e., at the end of every round) each arm is pulled at least a pre-specified fraction of times. This imposes an additional constraint on the algorithm which we refer to as a \textit{fairness constraint}, specified in terms of a vector $r \in \mathbb{R}^k$. 

Formally, each component $r_i$ of the given vector $r$ specifies a \emph{fairness-quota} for arm $i$ and the online algorithm must ensure that for all time steps $t$ (i.e. uniformly), each arm $i$ is pulled at least $\lfloor r_i\cdot t \rfloor $ times in $t$ rounds. The goal of the online algorithm is to minimize expected regret while satisfying the fairness requirement of each arm. The expected regret in this setting, which we call \rregret, is computed with respect to the optimal \emph{fair} policy (see Definition \ref{def:rRegret}). We note that the difficulty of this problem is in satisfying these fairness constraints at the end of every round, which in particular ensures fairness even when the time horizon is unknown to the algorithm beforehand. 

It is relevant to note that the current work contributes to the long line of work in constrained variants of the \SMAB problem \cite{BAD13,KLE10,XIA15}.
The fairness constraints described above naturally capture many real-world settings wherein the arm pulls correspond to allocation of resources among agents with specified entitlements (quotas). The objective of ensuring a certain minimum allocation guarantee to each individual is, at times, at odds with the objective of maximizing efficiency, the classical goal of any learning algorithm. However, in many applications the allocation rules must consider such constraints in order to ensure fairness. The minimum entitlement over available resources secures the prerogative of individuals. For concreteness, we next present a motivating example. 

The US Department of Housing and Urban Development recently sued Facebook for engaging in housing discrimination by targeting ads based on attributes such as gender, race, religion, etc. which are protected classes under the US law\footnote{\url{https://www.technologyreview.com/s/613274/facebook-algorithm-discriminates-ai-bias/}}. Facebook's algorithm that decides which ad should be shown to a particular user, inadvertently ends up discriminating because of the objective that it is trying to optimize. For example if the algorithm learns that it can generate more revenue by displaying an ad to more number of men as compared to women, then it would end up discriminating against women. The proposed \FSMAB model ensures that both men and women are shown the ad for at least a pre-specified fraction of the total number of ad displays, thereby preserving the fundamental right of equal access to opportunities.
In a way, the minimum fraction guarantee also  provides a  moral justification to the chosen allocation rule by evaluating it to be fair under the veil of ignorance \cite{RAWLS71} in which an allocation rule is considered as a hypothetical agreement among free and equal individuals unaware of the natural capabilities and  circumstantial advantages and biases they might have i.e. a socially agreed upon allocation in the original position (refer to \cite{FRE19,HEI18NIPS} for a detailed discussion).  

The fairness model in this work naturally captures many resource allocation situations such as the sponsored ads on a search engine where each advertiser should be guaranteed a certain fraction of pulls in a bid to avoid monopolization of ad space; crowd-sourcing where each crowd-worker is guaranteed a fraction of tasks in order to induce participation; and a wireless communication setting where the receiver must ensure minimum quality of service to each sender. The work by \cite{LLJ19} contains a detailed discussion of these applications. We discuss other related works on fairness in Section \ref{sec:relWork}.  

\textbf{Our contributions: } We first  define the \FSMAB problem in Section \ref{sec:model}. Any \FSMAB algorithm is evaluated based on two criteria: the fairness guarantee it  provides and its \rregret. The fairness notion that we consider requires that the fairness constraints be satisfied after each round, and the \rRegret notion is a natural extension of the conventional notion of regret which is defined with respect to an optimal policy that satisfies the fairness constraints. The uniform time fairness guarantee that we seek ensures fairness even in \emph{horizon-agnostic case}, that is when the time horizon $T$ is unknown to the algorithm. We remark that, even when the horizon $T$ is known, the intuitive approach of pulling each arm sufficiently many times to satisfy its fairness constraint does not guarantee fairness at the end of each round (see Appendix, Algorithm \ref{alg:tfucb}).



As our primary contribution, in Section \ref{sec: proposed algorithms}, we define a class of \FSMAB algorithms, called \falg,
characterized by two parameters: the unfairness tolerance and the learning algorithm used as a black-box. We prove that any algorithm in \FALG satisfies the fairness constraints at any time $t$. Thus the fairness guarantee for \FALG holds uniformly over time, independently  of the choice of the learning algorithm. We note here that our meta-algorithm \falg, allows any \SMAB algorithm to be plugged-in as a black-box. This simple yet elegant framework can be implemented on top of any existing \SMAB algorithm to ensure fairness with quantifiable loss in terms of regret. The practical applicability of our algorithm is a notable feature of this work.  

When the learning algorithm is \ucb1, we prove a sub-logarithmic  \rRegret bound for the \FUCB algorithm. Additionally,  for sufficiently large $T$ we see that the  \FUCB incurs constant \rregret. We then evaluate the cost of fairness in \FSMAB with respect to the \emph{conventional} notion of regret in Section \ref{sec:cost_of_fair}. We conclude by providing detailed experimental results to validate our theoretical guarantees in Section \ref{sec:simulation}. In particular, we compare the performance of \FUCB with \LFG algorithm proposed in \cite{LLJ19}, which is the work closest to the current paper. We remark here that we obtain a much stronger fairness guarantee that holds at any time, unlike the asymptotic fairness guarantee of \LFG. We also prove a better regret bound with finer dependence on the problem instance parameters. Section \ref{sec:relWork} provides a detailed comparison.
\section{The Model}\label{sec:model}
In this section we formally define the \FSMAB problem, the notion of fairness, and the concept of $r$-regret used in this work.
\subsection{The \FSMAB Problem}\label{sec:model problem}
An instance of the \FSMAB problem is a tuple $\langle T, [k], (\mu_i)_{i\in [k]}, (r_i)_{ i\in [k]}\rangle$, where $T$ is the time horizon, $[k] = \{1,2,\ldots,k\}$ is the set of arms, $\mu_i \in [0,1]$ represents the mean of the reward distribution $\mathcal{D}_i$ associated with arm $i$, and $(r_i)_{i \in [k]}$ represents the fairness constraint vector.
In the \FSMAB setting, the fairness constraints are exogenously specified to the algorithm in the form of a vector $r = (r_1,r_2,\ldots,r_k)$ where $r_i \in [0,1/k)$, for all $i \in [k]$, and consequently $\sum_{i \in [k]} r_i < 1$ and $r_i$ denotes the minimum fraction of times an arm $i \in [k]$ has to be pulled in $T$ rounds, for any $T$. We consider $r_i \in [0,1/k)$ to be consistent with the notion of \emph{proportionality} wherein, guaranteeing any arm a fraction greater than its proportional fraction, which is $1/k$, is \emph{unfair} in itself. 


In each round $t$, a \FSMAB algorithm pulls an arm $i_t \in [k]$ and collects the reward $X_{i_t} \sim \mathcal{D}_{i_t}$. We assume that the reward distributions are $Bernoulli({\mu}_i)$ for each arm $i \in [k]$. This assumption holds without loss of generality since one can reduce the \SMAB problem with general distributions supported on [0,1] to a \SMAB problem with Bernoulli rewards using the extension provided in \cite{AGR12}. Note that the true value of $\mu = ({\mu}_1, {\mu}_2,\ldots, {\mu}_k)$ is \emph{unknown} to the algorithm. Throughout this paper we assume without loss of generality that $\mu_1 > \mu_2 > \ldots > \mu_k$ and arm $1$ is called the \emph{optimal} arm.
 Next, we formalize the notions of fairness and regret used in the paper.
\subsection{Notion of Fairness}
\label{subsec:model fairness}
Let $N_{i,t}$ denote the number of times arm $i$ is pulled in $t$ rounds.  We first present the definition of fairness proposed by \cite{LLJ19} and then define the stronger notion of fairness considered in this paper.
\begin{definition}\cite{LLJ19}\label{def:asymptotic fairness}
A \FSMAB algorithm $\mathcal{A}$ is called (asymptotically) fair if $ \liminf_{t \rightarrow \infty } \mathbbm{E}_{\mathcal{A}}\big[ r_i - \frac{N_{i,t}}{t}\big] \leq 0$ for all $i \in [k]$. 
\end{definition}
We refer to the above notion of fairness as \emph{asymptotic fairness}. We now define a much stronger notion of fairness that holds  over all rounds and is parameterized by  the \TOL allowed in the system which is denoted by a constant $\alpha \geq 0$.
\begin{definition}\label{definition: fairness}
Given an \TOL $\alpha \geq 0$, a \FSMAB algorithm $\mathcal{A}$ is said to be \aFair if $\floor{r_it} - N_{i,t} \leq \alpha$ for all $t \leq T$ and  for all arms $i \in [k]$.
\end{definition}
In particular, if the above guarantee holds for $\alpha  = 0$, then we call the \FSMAB algorithm \emph{fair}. Note that our fairness guarantee holds uniformly over the time horizon and for any sequence of arm pulls $(i_t)_{t \leq T}$ by the algorithm. Hence it is much stronger than the guarantee in \cite{LLJ19} which only guarantees asymptotic fairness (Definition \ref{def:asymptotic fairness}). Notice that for any given constant $\alpha \geq 0$, $\alpha$-fairness (Definition \ref{definition: fairness}) implies asymptotic fairness.
\subsection{Notions of Regret}
\label{subsec:model regret}
 In the \SMAB setting, the optimal policy is the one which pulls the optimal arm in every round. The regret of a \SMAB algorithm is defined as the difference between the cumulative reward of the optimal policy and that of the algorithm.
\begin{definition}
\label{def:smab regret}
The expected regret of a \SMAB algorithm $\mathcal{A}$ after $T$ rounds is defined as:  
\begin{equation}
    \mathcal{R}_{\mathcal{A}}(T) = \sum_{i\in [k]}{\Delta_i \cdot \mathbb{E}[N_{i,T}]}
\end{equation}
\end{definition}
where $\Delta_i = \mu_1 - \mu_i$ and $N_{i,T}$ denotes the number of pulls of an arm $i \in [k]$ by $\mathcal{A}$ in $T$ rounds.

We call an algorithm optimal if it attains zero regret. It is easy to see that the above notion of regret does not adequately quantify the performance of a \FSMAB algorithm as the optimal policy here does not account for the fairness constraints. Also, note that the conventional regret in the \FSMAB setting can be $O(T)$ (see Section \ref{sec:cost_of_fair} for further details). Hence, we first state the fairness-aware optimal policy that we consider as a baseline.
\begin{observation}
\label{obs:observationOne}
A \FSMAB algorithm $\mathcal{A}$ is optimal  iff  $\mathcal{A}$ satisfies the following: if $\floor{r_i T} - \alpha >0$ then $N_{i,T} = \floor{r_i T} - \alpha $, else $N_{i,T} = 0$, for all $i \neq 1$.
\end{observation}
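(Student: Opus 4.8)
The plan is to reduce optimality to a statement about the terminal pull-count vector $(N_{i,T})_{i \in [k]}$. Since the expected cumulative reward of any policy equals $\sum_{i \in [k]} \mu_i\,\mathbb{E}[N_{i,T}]$ while the counts always obey the budget $\sum_{i \in [k]} N_{i,T} = T$, the reward is a linear functional of the terminal counts that is blind to the order in which the pulls are made. First I would identify the baseline that defines zero \rregret: among all \afair allocations of the $T$ pulls, the one maximizing this linear functional. Because $\mu_1 > \mu_2 > \cdots > \mu_k$, maximizing $\sum_i \mu_i N_{i,T}$ under the fixed budget $\sum_i N_{i,T} = T$ is achieved by pulling arm $1$ as often as possible, equivalently by pulling every sub-optimal arm as few times as its fairness constraint permits.

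Next I would compute this minimal number of sub-optimal pulls. Instantiating Definition \ref{definition: fairness} at $t = T$ gives $N_{i,T} \ge \floor{r_i T} - \alpha$ for every arm, and together with the trivial bound $N_{i,T} \ge 0$ this yields $N_{i,T} \ge m_i$, where $m_i := \max\{\floor{r_i T} - \alpha,\, 0\}$. This reproduces the two-case bound in the statement: $m_i = \floor{r_i T} - \alpha$ when $\floor{r_i T} - \alpha > 0$, and $m_i = 0$ otherwise. The model assumption $\sum_i r_i < 1$ ensures $\sum_{i \neq 1} m_i < T$, so the baseline fair policy can pull each sub-optimal arm exactly $m_i$ times and still devote the remaining $T - \sum_{i \neq 1} m_i$ pulls to arm $1$.

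Finally I would express the \rregret of an algorithm $\mathcal{A}$ in this form and read off the equivalence. Eliminating $N_{1,T}$ through the budget constraint, the gap between the baseline reward and $\mathbb{E}_{\mathcal{A}}\big[\sum_i \mu_i N_{i,T}\big]$ reduces to $\sum_{i \neq 1} \Delta_i\big(\mathbb{E}[N_{i,T}] - m_i\big)$, where $\Delta_i = \mu_1 - \mu_i > 0$ for every $i \neq 1$. Each summand is non-negative because $\alpha$-fairness forces $N_{i,T} \ge m_i$ pointwise, so the \rregret is a sum of non-negative terms and vanishes if and only if every summand does, i.e. $\mathbb{E}[N_{i,T}] = m_i$ for all $i \neq 1$. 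Since $N_{i,T} \ge m_i$ holds deterministically, equality in expectation forces $N_{i,T} = m_i$ almost surely, which is exactly the claimed characterization.

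The main obstacle is conceptual rather than computational: justifying that nothing is lost by restricting attention to the terminal counts. This rests on two facts acting together, namely that cumulative reward ignores the ordering of pulls, and that the intermediate constraints at times $t < T$ never push a sub-optimal arm above its terminal floor $m_i$. I would make the latter precise by exhibiting a schedule that keeps $N_{i,t}$ at or just above $\floor{r_i t} - \alpha$ throughout and lands at exactly $m_i$ at $t = T$, certifying that the baseline allocation is itself \afair and hence a legitimate optimum; the equivalence stated in the observation is then tight.
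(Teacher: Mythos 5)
Your argument is correct and matches the paper's (implicit) reasoning: the paper states this as an unproved Observation, justified only by the remark that an optimal fair algorithm must pull each sub-optimal arm exactly the minimum number of times its fairness constraint permits and give all remaining pulls to arm 1 --- which is precisely your linearity/exchange argument over terminal counts combined with the nonnegative per-arm decomposition of the \rregret. Your final step certifying that the baseline allocation is itself realizable by an $\alpha$-fair schedule is not needed for the stated equivalence, and is in any case supplied by the paper's Theorem \ref{theorem: fairness of fAlg}.
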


From Observation \ref{obs:observationOne} we have that an optimal \FSMAB algorithm that knows the value of $\mu$ must play sub-optimal arms exactly $\floor{ r_i\cdot T} -\alpha$ times in order to satisfy the fairness constraint and  play the optimal arm (arm 1) for the rest of the rounds i.e. for  $T - \sum_{i \neq 1} \floor{r_i\cdot T} + (k-1)\alpha$ rounds. The regret of an algorithm is compared with such an optimal policy that satisfies the fairness constraints in the \FSMAB setting.

\begin{definition}\label{def:rRegret}
Given a fairness constraint vector $r = (r_1, r_2, \ldots,r_k)$ and the \TOL $\alpha \geq 0$, the fairness-aware \rRegret of a \FSMAB algorithm $\mathcal{A}$ is defined as:
\begin{equation}
    \label{eq:rRegret}
    \mathcal{R}_{\mathcal{A}}^{r}(T) = \sum_{i\in [k]}{\Delta_i \cdot \Big(\mathbb{E}[N_{i,T}] - \textnormal{max}\big(0, \floor{r_i \cdot T} - \alpha \big) \Big)}
\end{equation}
\end{definition}
The $\textnormal{max}(0, \floor{r_i \cdot T} - \alpha )$ in the above definition accounts for the number of pulls of arm $i$ made by the optimal algorithm to satisfy its fairness constraint. Also the \rRegret of an algorithm that is not $\alpha$-fair could be negative but this is an infeasible solution. 
A learning algorithm that pulls a sub-optimal arm $i$ for more than $\floor{r_iT} - \alpha$ rounds, incurs a regret of $\Delta_i = \mu_1 - \mu_i$ for each extra pull. The technical difficulties in designing an optimal algorithm for the \FSMAB problem are the conflicting constraints on the quantity $N_{i,T} - \floor{r_iT}$ for a sub-optimal arm $i \neq 1$: at any time $T$ for the algorithm to be fair we want $N_{i,T} - \floor{r_iT}$ to be at least $\alpha$ whereas to minimize the regret we want $N_{i,T} - \floor{r_iT}$ to be close to $\alpha$.

%
\section{A Framework for \FSMAB Algorithms}
\label{sec: proposed algorithms}

\label{subsec: fair algorithm class}
\noindent In this section, we provide the framework of our proposed class of \FSMAB algorithms. Our meta-algorithm \FALG is given in Algorithm \ref{alg:fair Alg}. The key result in this work is the following theorem, which guarantees that \FALG is \aFair (see Definition \ref{definition: fairness}), independent of the choice of the learning algorithm \learn($\cdot$). Note that the fairness guarantee holds uniformly over the time horizon, for any sequence of arm pulls by \falg.

\begin{algorithm}[ht!]
 \SetAlgoLined
 \KwIn{$[k], (r_i)_{i \in [k]}, \alpha \geq 0$, \learn($\cdot$) }
 \textbf{Initialize:} \\ 
   $N_{i,0} = 0$ for all $i \in [k]$\\
   $S_{i,0} = 0$ for all $i \in [k]$, where $S_{i,t} = $ total reward of arm $i$ in $t$ rounds\\
  \For{$t = 1,2, \ldots$} { 
     Define : $A(t) = \Big\{ i ~\big|~ r_i \cdot (t-1) - N_{i,t-1} > \alpha \Big\}$ \\ 
     Pull arm $i_t = \begin{cases}
    \argmax_{i \in [k]} \big(r_i \cdot (t-1) - N_{i,t-1}\big) & \text{If} A(t) \neq \emptyset  \\ 
    \learn(N_t,S_t) & \text{Otherwise}  \\ 
    \end{cases}$ \\ 
    Update parameters $N_t$ and $S_t$
    }
  \caption{\falg}
  \label{alg:fair Alg}
\end{algorithm}
\begin{restatable}{theorem}{FAlgFair}\label{theorem: fairness of fAlg}
For a given $\alpha \geq 0$ and for any given fairness constraint vector $r = (r_1, r_2, \ldots, r_k)$ where $r_i \in [0, \frac{1}{k})$ for all $i \in [k]$, \FALG is \aFair irrespective of the choice of the learning algorithm \learn($\cdot$).
\end{restatable}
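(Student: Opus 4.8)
The plan is to prove the stronger, real‑valued statement that the \emph{deficit} $d_{i,t} := r_i t - N_{i,t}$ of every arm stays strictly below $\alpha+1$ at all times, and then read off $\alpha$-fairness from it. Indeed $\floor{r_i t}-N_{i,t}$ is an integer and is at most $d_{i,t}$, so once $d_{i,t}<\alpha+1$ is known (treating $\alpha$ as the integral tolerance, as it is used as a count in Observation~\ref{obs:observationOne} and Definition~\ref{def:rRegret}) we immediately obtain $\floor{r_i t}-N_{i,t}\le\alpha$, which is exactly Definition~\ref{definition: fairness}. I would run the argument by strong induction on $t$ — equivalently, by fixing a minimal round $t$ at which some arm $a$ first reaches $d_{a,t}\ge\alpha+1$ — so that the induction hypothesis supplies $d_{i,u}<\alpha+1$ for every arm $i$ and every $u<t$.

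The per‑round update is $d_{i,t}=d_{i,t-1}+r_i-\mathbbm{1}[i_t=i]$. Two of the three cases are immediate. If $a$ is the arm pulled at round $t$, then $d_{a,t}=d_{a,t-1}+r_a-1<(\alpha+1)+r_a-1<\alpha+1$ by the hypothesis, so $a$ cannot be the first violator. If $A(t)=\emptyset$ (a \learn round), then by the definition of $A(t)$ we have $d_{a,t-1}\le\alpha$, and since $a$ gains at most $r_a<1$ this round, $d_{a,t}\le\alpha+r_a<\alpha+1$. Hence a violation can only occur on a \emph{priority} round ($A(t)\neq\emptyset$) on which $a$ is \emph{not} the chosen arm: there $a\in A(t)$ (one checks $d_{a,t-1}=d_{a,t}-r_a\ge\alpha+1-r_a>\alpha$), yet the $\argmax$ rule pulls some other arm $b$ with $d_{b,t-1}\ge d_{a,t-1}$.

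This last case is the heart of the proof and the main obstacle: several arms may simultaneously have deficit above $\alpha$, and since \falg serves only one arm per round, the starved arm $a$ keeps accruing $r_a$ each round while higher‑deficit arms are cleared first. The structural fact that prevents a runaway is the budget condition $\sum_i r_i<1$, which I would make quantitative through the identity $\sum_i d_{i,t}=t\big(\sum_i r_i-1\big)<0$ (forbidding all arms from being high at once) together with the largest‑deficit‑first rule. Concretely, let $q<t$ be the last round with $d_{a,q}\le\alpha$ and set $W=t-q$. Since $a$ can only be reduced when it is actually pulled, the number $c_a$ of pulls it receives in the window satisfies $d_{a,t}=d_{a,q}+Wr_a-c_a$, so $c_a\le Wr_a-1$; as $c_a\ge0$ this forces $Wr_a\ge1$, i.e. $W>k$ (using $r_a<1/k$). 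Thus $a$ is owed a pull on almost every round of a long window, and on each such round a \emph{strictly higher}‑deficit arm is served instead. The plan is to charge these competing pulls against the deficit budget: each competitor pulled while above $\alpha$ must have freshly accumulated its excess, and because the aggregate accumulation rate is $\sum_i r_i<1$ per round while one full unit is spent per round, the competitors cannot collectively remain ahead of $a$ for more than $k$ rounds, contradicting $W>k$. Tracing this contradiction back closes the induction.

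The delicate point, where I expect most of the work to lie, is exactly this accounting for simultaneous deficits. A naive potential such as $\Phi_t=\sum_i\big(d_{i,t}-\alpha\big)_+$ does \emph{not} stay below $1$ — it can exceed $1$ when many arms sit modestly above $\alpha$ — so one cannot simply bound the sum of excesses; the argument must track the \emph{maximum} deficit and crucially exploit both the ordering imposed by the $\argmax$ rule and the strict inequality $\sum_i r_i<1$. Everything else (the reduction to the real‑valued deficit, the two easy cases, and the per‑round bookkeeping) is routine, and none of it depends on the choice of \learn$(\cdot)$, which is only ever consulted on rounds where $A(t)=\emptyset$; this is what makes the guarantee hold uniformly over time and independently of the black‑box learner.
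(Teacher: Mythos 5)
Your reduction to the real-valued deficit $d_{i,t}=r_it-N_{i,t}$ and the target invariant $d_{i,t}<\alpha+1$ is exactly the paper's framing, and your two easy cases (the violator is the pulled arm; the round is a \learn\ round with $A(t)=\emptyset$) are handled correctly. But the proof is not complete: the entire difficulty sits in the case you call ``the heart of the proof,'' and there you only offer a plan. The central quantitative claim of that plan --- that arms with deficit exceeding that of the starved arm $a$ ``cannot collectively remain ahead of $a$ for more than $k$ rounds, contradicting $W>k$'' --- is neither proved nor obviously true. The rounds on which competitors are served need not be consecutive, a competitor pulled once can climb back above $a$ if $a$'s own deficit is still hovering just above $\alpha$, and the window $[q,t]$ can legitimately be much longer than $k$ (an arm may sit above $\alpha$ for a long time without ever approaching $\alpha+1$). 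Your own observation that the potential $\sum_i(d_{i,t}-\alpha)_+$ fails shows you have located the obstruction, but locating it is not the same as overcoming it; as written, the contradiction is asserted, not derived.

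The paper closes this case with a pigeonhole structure you would need to discover (or replace by something equally sharp): partition the danger zone $[\alpha,\alpha+1)$ into $k$ bands $M_{j,t}$ of width $1/k$ each, and prove by induction (Lemma~\ref{lem:FairnessLemma}) that the top $j$ bands together never contain more than $j$ arms. This invariant is maintained because (i) an unpulled arm climbs by $r_i<1/k$, hence by at most one band per round (Observation~\ref{obs:armNotPulled}), (ii) a pulled arm falls to the bottom band or below (Observation~\ref{obs:armPulled}), and (iii) the $\argmax$ rule always serves the topmost occupied band, so that band is vacated before any arm two bands down could reach it. In particular the top band holds at most one arm, which is necessarily pulled next, so no arm ever crosses $\alpha+1$. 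This is a per-round local invariant rather than a global window/charging argument, and it is where the hypotheses $r_i<1/k$ and the largest-deficit-first rule are actually consumed. Until you supply a correct replacement for this step, the proposal has a genuine gap. (A minor shared caveat: the final deduction $d_{i,t}<\alpha+1\Rightarrow\floor{r_it}-N_{i,t}\le\alpha$ uses integrality of $\alpha$, which you flag and which the paper also implicitly assumes.)
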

The proof of Theorem \ref{theorem: fairness of fAlg} is given in Section \ref{sec: theoretical results}. The guarantee in the above theorem also holds when $\alpha = 0$ and hence \FALG with $\alpha = 0$ is \emph{fair}. In particular, when the learning algorithm \learn($\cdot$) = \ucb1, we call this algorithm \fucb. We provide the \rRegret bound for \fucb.
\begin{restatable}{theorem}{FairUCBRegret}
\label{theorem: FUCB regret}
The \rRegret of \FUCB is given by
\begin{align*}
\small
    \mathcal{R}_{\fucb}^{r}(T) &\leq \Big(1 + \frac{\pi^2}{3} \Big) \cdot \sum_{i \in [k]}\Delta_i  \\ & +  \sum_{\substack{i \in S(T) \\ i \neq 1}}\Delta_i \cdot \bigg( \frac{8 \ln T}{\Delta_i^2} - \Big(r_i \cdot T - \alpha \Big) \bigg) 
\end{align*}
where $S(T) = \Big\{i \in [k] ~\big| ~r_i \cdot T - \alpha < \frac{8 \ln T}{\Delta_i^2}\Big\}$. In particular for large enough $T$, $\mathcal{R}_{\fucb}^{r}(T) \leq \Big(1 + \frac{\pi^2}{3} \Big) \cdot \sum_{i \in [k]}\Delta_i$.
\end{restatable}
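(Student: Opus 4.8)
The plan is to bound the \rRegret\ arm-by-arm by controlling $\mathbb{E}[N_{i,T}]$ for each sub-optimal arm $i\neq 1$, since arm $1$ contributes nothing ($\Delta_1=0$). In \fucb\ a sub-optimal arm is pulled for exactly two reasons: either it is the most deficient arm when $A(t)\neq\emptyset$ (a \emph{fairness} pull, determined by the set $A(t)$ in Algorithm~\ref{alg:fair Alg}), or $A(t)=\emptyset$ and \ucb1\ selects it (a \emph{learning} pull). The heart of the argument is to show that these two mechanisms do not compound: once arm $i$ has accumulated enough samples, neither mechanism pulls it again, except on a low-probability event. Concretely, I would fix the threshold $\tau_i=\max\!\big(\ceil{8\ln T/\Delta_i^2},\,\ceil{r_iT-\alpha}\big)$ and write
\[
N_{i,T}\;\le\;\tau_i\;+\;\sum_{t=1}^{T}\mathbbm{1}\big[i_t=i,\ N_{i,t-1}\ge\tau_i\big],
\]
using that at most $\tau_i$ pulls can occur while $N_{i,t-1}<\tau_i$, as $N_{i,\cdot}$ is non-decreasing from $0$.

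For any round counted in the sum we have $N_{i,t-1}\ge\tau_i\ge r_iT-\alpha\ge r_i(t-1)-\alpha$, so $i\notin A(t)$ and the pull cannot be a fairness pull; hence $A(t)=\emptyset$ and the pull is a \ucb1\ selection of $i$ at a time when $N_{i,t-1}\ge\ceil{8\ln T/\Delta_i^2}\ge 8\ln t/\Delta_i^2$. This is exactly the event the standard \ucb1\ analysis rules out: decomposing the event that \ucb1\ prefers $i$ to arm $1$ into (i) the optimal arm's index falling below $\mu_1$, (ii) arm $i$'s index exceeding $\mu_i$, and (iii) $\Delta_i<2\sqrt{2\ln t/N_{i,t-1}}$, the third is impossible under $N_{i,t-1}\ge 8\ln t/\Delta_i^2$, while the first two are controlled by Hoeffding's inequality and sum to at most $\pi^2/3$ over all $t$ and sample counts. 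The subtle point to verify, and the main obstacle, is that this confidence-bound argument survives the interleaving of fairness pulls: the empirical mean $\bar X_{i,s}$ after $s$ pulls is an average of $s$ i.i.d.\ draws regardless of which rounds those pulls fell in, so Hoeffding still applies per sample count $s$; and although the exploration term uses the inflated global round index $t$ (which also advances on fairness rounds), this only widens the confidence intervals and leaves event (iii) ruled out since $t\le T$. Taking expectations yields the key estimate
\[
\mathbb{E}[N_{i,T}]\;\le\;\max\!\Big(\tfrac{8\ln T}{\Delta_i^2},\,r_iT-\alpha\Big)+1+\tfrac{\pi^2}{3}.
\]

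Finally I would substitute this into Definition~\ref{def:rRegret}, splitting the sum according to membership in $S(T)$. For $i\notin S(T)$ the fairness term dominates, the $\max$ equals $r_iT-\alpha$, and it cancels the baseline $\max(0,\floor{r_iT}-\alpha)$ up to an integer rounding of at most one pull; for $i\in S(T)$ the logarithmic term dominates and leaves the residual $\Delta_i\big(8\ln T/\Delta_i^2-(r_iT-\alpha)\big)$. Collecting the per-arm additive $\Delta_i(1+\pi^2/3)$ into the first summand gives the stated bound. For the ``large enough $T$'' claim, observe that for any arm with $r_i>0$ the quantity $r_iT-\alpha$ grows linearly while $8\ln T/\Delta_i^2$ grows only logarithmically, so eventually $r_iT-\alpha\ge 8\ln T/\Delta_i^2$ and $i$ leaves $S(T)$; once $S(T)\setminus\{1\}=\emptyset$ the second sum vanishes and only $(1+\pi^2/3)\sum_{i\in[k]}\Delta_i$ remains.
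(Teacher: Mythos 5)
Your proof is correct and follows essentially the same route as the paper: bound $\mathbb{E}[N_{i,T}]$ for each sub-optimal arm by a threshold (the paper's two cases are exactly the two branches of your $\max(\lceil 8\ln T/\Delta_i^2\rceil,\lceil r_iT-\alpha\rceil)$), then invoke the standard \ucb1 confidence-bound analysis for pulls beyond that threshold and substitute into Definition~\ref{def:rRegret}. If anything, you are more explicit than the paper about why the Hoeffding/per-sample-count argument is unaffected by the interleaved fairness pulls and why a pull of $i$ with $N_{i,t-1}\geq r_iT-\alpha$ must come from the \learn\ branch, both of which the paper leaves implicit.
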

Theorem \ref{theorem: FUCB regret} is proved in Section \ref{sec: theoretical results}. Observe that if $S(T)$  $\neq$ $\emptyset$ the \rRegret of \FUCB is sub-logarithmic and if $S(T)$ $ = $ $\emptyset$ then the \rRegret is constant. We prove the distribution-free regret of \FUCB in Theorem \ref{thm: instance independent regret} (proof in Appendix (Section \ref{sec:omitted})).
\begin{restatable}{theorem}{InstanceIndependentRegret}\label{theorem: distribution independent regret}
\label{thm: instance independent regret}
The distribution-free \rRegret of \FUCB is $O(\sqrt{T\ln T})$.
\end{restatable}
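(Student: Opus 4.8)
The plan is to derive the distribution-free bound from the instance-dependent bound of Theorem \ref{theorem: FUCB regret} by a standard gap-thresholding (peeling) argument, treating the number of arms $k$ as a constant. The starting observation is that Theorem \ref{theorem: FUCB regret} implicitly controls the number of \emph{extra} pulls of each sub-optimal arm, i.e. the pulls beyond the fairness quota. Writing $n_i = \mathbb{E}[N_{i,T}] - \max\big(0, \floor{r_i T} - \alpha\big)$, the analysis underlying Theorem \ref{theorem: FUCB regret} shows that the fairness branch of \FALG (the case $A(t) \neq \emptyset$) only contributes pulls that count toward the quota, while the \learn$(\cdot)=\ucb1$ branch contributes the usual exploration overhead. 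First I would extract from this the per-arm bound $n_i \leq \frac{8 \ln T}{\Delta_i^2} + \big(1 + \frac{\pi^2}{3}\big)$, valid for every sub-optimal arm $i \neq 1$, by dropping the (non-positive) quota contribution $-(r_i T - \alpha)$ and the $\max(0,\cdot)$ from the expression in Theorem \ref{theorem: FUCB regret}. I would also record the conservation identity $\sum_{i \in [k]} n_i \leq T$, which follows from $\sum_i \mathbb{E}[N_{i,T}] = T$.

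Next I would fix a threshold $\Delta > 0$ (to be optimized) and partition the sub-optimal arms according to whether their gap is small or large. For arms with $\Delta_i \leq \Delta$, I bound the contribution crudely using the conservation identity:
\[
\sum_{i : \Delta_i \leq \Delta} \Delta_i \, n_i \;\leq\; \Delta \sum_{i \in [k]} n_i \;\leq\; \Delta \, T.
\]
For arms with $\Delta_i > \Delta$, I instead use the per-arm bound above, which gives $\Delta_i \, n_i \leq \frac{8 \ln T}{\Delta_i} + \big(1 + \frac{\pi^2}{3}\big)\Delta_i \leq \frac{8 \ln T}{\Delta} + \big(1 + \frac{\pi^2}{3}\big)$, since $\Delta_i \leq 1$. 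Summing over the at most $k$ such arms yields a contribution of at most $\frac{8 k \ln T}{\Delta} + \big(1 + \frac{\pi^2}{3}\big) k$.

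Combining the two parts gives $\mathcal{R}^{r}_{\fucb}(T) \leq \Delta \, T + \frac{8 k \ln T}{\Delta} + \big(1 + \frac{\pi^2}{3}\big) k$, and I would then balance the first two terms by choosing $\Delta = \sqrt{\frac{8 k \ln T}{T}}$, which makes each of them equal to $\sqrt{8 k T \ln T}$. This leaves $\mathcal{R}^{r}_{\fucb}(T) \leq 2\sqrt{8 k T \ln T} + O(k) = O(\sqrt{T \ln T})$ for fixed $k$, as claimed. The step I expect to be the main obstacle is the first one: cleanly justifying the per-arm extra-pull bound $n_i \leq \frac{8\ln T}{\Delta_i^2} + O(1)$ from the proof of Theorem \ref{theorem: FUCB regret}, and in particular verifying that the quota-enforcing pulls triggered by the set $A(t)$ do not inflate the count of extra pulls $n_i$. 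Once that accounting is pinned down, the remaining peeling-and-optimize steps are routine and mirror the classical distribution-free analysis of \ucb1.
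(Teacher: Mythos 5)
Your argument is correct, and it takes a genuinely different route from the paper's. You work entirely from the instance-dependent analysis: the per-arm extra-pull bound $n_i \leq \frac{8\ln T}{\Delta_i^2} + \big(1+\frac{\pi^2}{3}\big) + 1$ that you flag as the main obstacle is in fact already established in both cases of the proof of Theorem \ref{theorem: FUCB regret} (Case 1 gives $n_i = O(1)$ outright, Case 2 gives $n_i \leq \mathbb{E}[N_{i,T}] \leq \frac{8\ln T}{\Delta_i^2} + O(1)$ since the quota term being subtracted is nonnegative; the extra $+1$ absorbs the difference between $\floor{r_iT}$ and $r_iT$), and the quota-enforcing pulls from $A(t)\neq\emptyset$ are harmless precisely because they are counted against $\max(0,\floor{r_iT}-\alpha)$. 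From there your gap-thresholding with $\sum_i n_i \leq T$ and $\Delta = \sqrt{8k\ln T/T}$ is the standard conversion and yields $O(\sqrt{kT\ln T})$. The paper instead drops the quota terms to get $\mathcal{R}^{r}_{\fucb}(T) \leq k + \sum_i \Delta_i\,\mathbb{E}[N_{i,T}]$, asserts that the worst case over $r$ is $r=0$, and then invokes the distribution-free bound for plain \ucb1 (Theorem \ref{thm: ucb instance independent}). That reduction is shorter but rests on a step that is not fully justified: $\mathbb{E}[N_{i,T}]$ in that expression is still the pull count of \FUCB run with the actual $r$, whose \emph{conventional} regret can be $\Theta(T)$ (as the paper's own Theorem \ref{theorem:alpha regret} shows), so one cannot simply replace it by the \ucb1 count. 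Your peeling argument sidesteps this entirely by bounding only the excess pulls, and is therefore the more robust of the two proofs; the only cost is the explicit $\sqrt{k}$ dependence, which is standard and harmless for the stated $O(\sqrt{T\ln T})$ claim.
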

We conclude this section by observing that as the fairness guarantees of \FALG hold without any loss in  \textsc{Learn($\cdot$)}, this framework can easily be made operational in practice.

\section{Cost of Fairness}
\label{sec:cost_of_fair}
Our regret guarantees until now have been in terms of \rregret, but now we evaluate the \emph{cost of fairness} in terms of the conventional notion of regret. In particular, we show the trade-off between the conventional regret and fairness in terms of the \TOL.
\begin{restatable}{theorem}{thmAlphaRegret}
\label{theorem:alpha regret}
The expected regret of \FUCB is given by
\begin{align*}
    \mathcal{R}(T) \leq& \sum_{i \in S(T)} (r_i\cdot T - \alpha) \cdot \Delta_i + \sum_{\substack{i \in S(T) \\ i \neq 1}}  8\ln T/ \Delta_i\\ & 
    + \sum_{i \in [k]} (1 + \pi^2/3)\cdot \Delta_i \tag{where $S(T) = \{i ~|~ (r_i \cdot T - \alpha) < 8\ln T/\Delta_{i}^2\}$.}
\end{align*}

\end{restatable}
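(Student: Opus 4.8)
The plan is to obtain the conventional regret by adding back to the $r$-regret of Theorem~\ref{theorem: FUCB regret} the deterministic cost that the optimal \emph{fair} baseline itself pays for pulling suboptimal arms. Comparing Definition~\ref{def:smab regret} with Definition~\ref{def:rRegret} term by term, the two notions differ only through the subtracted baseline $\max(0,\lfloor r_iT\rfloor-\alpha)$, which yields the exact identity
\begin{equation*}
\mathcal{R}(T)\;=\;\mathcal{R}^{r}_{\fucb}(T)\;+\;\sum_{i\in[k]}\Delta_i\cdot\max\!\big(0,\lfloor r_iT\rfloor-\alpha\big).
\end{equation*}
Since $\Delta_1=0$, arm $1$ drops out of both sums and it suffices to track the suboptimal arms.

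Next I would substitute the bound of Theorem~\ref{theorem: FUCB regret} for $\mathcal{R}^{r}_{\fucb}(T)$ and upper bound the added fairness cost using $\max(0,\lfloor r_iT\rfloor-\alpha)\le\max(0,r_iT-\alpha)$. Splitting every sum along $S(T)=\{i:r_iT-\alpha<8\ln T/\Delta_i^2\}$ and its complement, the $-(r_iT-\alpha)$ that sits inside the $r$-regret bound for each $i\in S(T)$ cancels the matching term of the fairness cost, leaving the exploration contributions $\sum_{i\in S(T),\,i\neq1}8\ln T/\Delta_i$; for the arms on which the quota dominates the exploration budget the learning term is inactive and only the fairness cost $\Delta_i(r_iT-\alpha)$ survives. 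Carrying over the constant $(1+\pi^2/3)\sum_{i\in[k]}\Delta_i$ from Theorem~\ref{theorem: FUCB regret} then assembles the stated bound.

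A more self-contained route extracts from the proof of Theorem~\ref{theorem: FUCB regret} the per-arm pull estimate $\mathbb{E}[N_{i,T}]\le\max\!\big(8\ln T/\Delta_i^2,\,r_iT-\alpha\big)+(1+\pi^2/3)$; this holds because in \FALG every pull of a suboptimal arm is triggered either by its fairness quota or by \ucb1, and each such pull advances $N_{i,t}$ towards both requirements, so the two demands combine as a maximum rather than a sum. Multiplying by $\Delta_i$, summing over $i\neq1$, and resolving the maximum (to the learning term on $S(T)$ and to the fairness term off it) reproduces the same expression. The only delicate point is the bookkeeping around the floor and the truncation at zero---one must verify that dropping $\lfloor\cdot\rfloor$ and the $\max(0,\cdot)$ only weakens the inequality---and that the cancellation of the $(r_iT-\alpha)$ terms is legitimate exactly on $S(T)$, where this is the quantity subtracted inside the $r$-regret bound; beyond this deterministic accounting no new probabilistic argument is required, as all concentration is already absorbed into Theorem~\ref{theorem: FUCB regret}.
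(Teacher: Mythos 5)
Your proposal is correct. Your ``more self-contained route'' is exactly the paper's proof: the paper bounds $\mathbb{E}[N_{i,T}]$ for each suboptimal arm via the two cases $r_iT-\alpha\geq 8\ln T/\Delta_i^2$ and $r_iT-\alpha<8\ln T/\Delta_i^2$, obtaining $\mathbb{E}[N_{i,T}]\leq\max\big(r_iT-\alpha,\,8\ln T/\Delta_i^2\big)+(1+\pi^2/3)$ by adapting the \ucb1 analysis, and then sums $\Delta_i\cdot\mathbb{E}[N_{i,T}]$. Your primary route---the exact identity $\mathcal{R}(T)=\mathcal{R}^{r}_{\fucb}(T)+\sum_{i}\Delta_i\max(0,\lfloor r_iT\rfloor-\alpha)$ obtained by comparing Definitions~\ref{def:smab regret} and~\ref{def:rRegret}, followed by substituting Theorem~\ref{theorem: FUCB regret}---is a genuinely different and arguably cleaner packaging: it makes the ``cost of fairness'' decomposition literal and reuses Theorem~\ref{theorem: FUCB regret} as a black box, with no further probabilistic work. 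Two bookkeeping caveats. First, your cancellation correctly places the quota term $\Delta_i(r_iT-\alpha)$ on the arms where the quota dominates, i.e.\ on the \emph{complement} of $S(T)$ as $S(T)$ is defined in the theorem statement, and the learning term $8\ln T/\Delta_i$ on $S(T)$ itself; the displayed theorem puts both sums over $S(T)$, which is inconsistent with its own definition of $S(T)$, and the paper's appendix proof silently resolves this by redefining $S(T)$ with the reversed inequality for the first sum. So what you assemble is the corrected bound, which is the one the proof actually supports---better to say so than to claim you reproduce the display verbatim. Second, for an arm $i\in S(T)$ with $r_iT<\alpha$, Theorem~\ref{theorem: FUCB regret}'s bound carries the positive surplus $-\Delta_i(r_iT-\alpha)$ while the added fairness cost $\max(0,\lfloor r_iT\rfloor-\alpha)$ vanishes, so your first route leaves an extra additive $\Delta_i(\alpha-r_iT)\leq\Delta_i\alpha$ on such arms; this constant slack is harmless but means the two routes do not coincide term by term in that degenerate case.
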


Theorem \ref{theorem:alpha regret} captures the explicit trade-off between regret and fairness in terms of the \TOL parameter $\alpha$. If $S(T) = \emptyset$ we have that the regret is $O(\ln T)$. This implies that if $\alpha > r_{i}T - 8 \ln T/\Delta_i^2$ for all $i\neq 1$, then the regret is $O(\ln T)$. However, if $S(T) \neq \emptyset$ then for each $i \in S(T)$, an additional regret equal to $r_{i}T - \alpha$ is incurred in which case the regret is $O(T)$. We complement these results with simulations in Section \ref{sec:simulation}.
\section{Proof of Theoretical Results}
\label{sec: theoretical results}
\noindent\textbf{Proof of Theorem \ref{theorem: fairness of fAlg}}\\
After each round $t$ (and before round $t+1$), we consider the $k+1$ sets, $M_{1,t}, M_{2,t},\ldots,M_{k,t}$, and $S_t$, as defined below:\begin{itemize}[noitemsep]
    \item  arm $i \in M_{j,t} \iff \alpha + \frac{(k - j)}{k} \leq r_{i}t - N_{i,t} < \alpha + \frac{(k - j + 1)}{k}$, $\forall j \in [k]$
    \item  arm $i \in S_t \iff r_{i}t - N_{i,t} <  \alpha$
\end{itemize} 
\begin{figure}[ht!]
\begin{center}
\begin{tikzpicture}
\coordinate (origin) at (0,0);
\coordinate (axisy1) at (0,3);
\coordinate (axisy2) at (0,-1);
\coordinate (axisx) at (3.5,0);
\coordinate (a2) at ($(origin) + (3,0.5)$);
\coordinate (b1) at ($(origin) + (0,0.5)$);
\coordinate (b2) at ($(b1) + (3,0.5)$);
\coordinate (c1) at ($(b1) + (0,1.5)$);
\coordinate (c2) at ($(c1) + (3,0.5)$);
\coordinate (d1) at ($(origin) + (0,2.5)$);

\node at ($ (origin) + (-0.20,0)$) [fill=white!100!] {\scriptsize $\alpha$};
\node at ($ (origin) + (-0.45,0.5)$) [fill=white!100!] {\scriptsize $\alpha + \frac{1}{k}$};
\node at ($ (origin) + (-0.45,1)$) [fill=white!100!] {\scriptsize $\alpha + \frac{2}{k}$};
\node at ($ (origin) + (-0.60,2)$) [fill=white!100!] {\scriptsize $\alpha + \frac{k-1}{k}$};
\node at ($ (origin) + (-0.45,2.5)$) [fill=white!100!] {\scriptsize $\alpha + 1$};

\node at ($ (origin) + (1.5,0.25)$) [fill=white!100!] {\scriptsize $M_{k,t}$};
\node at ($ (b1) + (1.5,0.25)$) [fill=white!100!] {\scriptsize $M_{k-1,t}$};
\node at ($ (c1) + (1.5,0.25)$) [fill=white!100!] {\scriptsize $M_{1,t}$};

\fill [white!50!red!50!] (origin) rectangle (a2);
\fill [white!40!red!60!] (b1) rectangle (b2);
\fill [white!30!red!70!] (c1) rectangle (c2);
\fill [white!60!red!40!] (origin) rectangle (3,-0.8);

\node at ($ (origin) + (1.5,0.25)$)  {\scriptsize $M_{k,t}$};
\node at ($ (b1) + (1.5,0.25)$) {\scriptsize $M_{k-1,t}$};
\node at ($ (c1) + (1.5,0.25)$) {\scriptsize $M_{1,t}$};
\node at ($ (origin) + (1.5,-0.4)$) {\scriptsize $S_t$};

\draw [<->] (axisy1) -- (axisy2);
\draw [-] (origin) -- (axisx);
\draw [-] (d1) -- ($(d1) + (3.5,0)$);

\draw [white!50!red!50!] ($(origin) + (3,0)$);
\draw [white!40!red!60!] ($(b1) + (3,0)$);
\draw [white!30!red!70!] ($(c1) + (3,0)$);

\draw[black,fill=black] (1.5,1.25) circle (.2ex);
\draw[black,fill=black] (1.5,1.5) circle (.2ex);
\draw[black,fill=black] (1.5,1.75) circle (.2ex);
\end{tikzpicture} 
\end{center}
\caption{Partition of the arms}\label{figure :partition of the arms}
\end{figure}
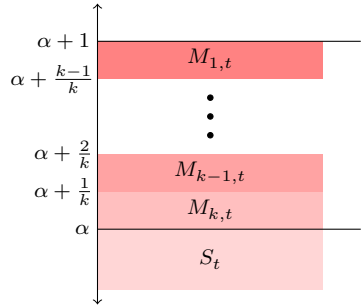
Let $V_{j,t} = \uplus_{\ell = 1}^j M_{\ell,t}$, for all $j \in [k]$. Then the following lemma guarantees the fairness of the algorithm and is at the heart of the proof. The proof of the theorem is immediate from the proof of the  lemma.
\begin{restatable}{lemma}{lemFair}
\label{lem:FairnessLemma}
For $t \geq 1$, we have
\begin{enumerate}
    \item $V_{k,t} \uplus S_t = [k]$
    \item $|V_{j,t}| \leq j$, for all $j \in [k]$
\end{enumerate}
\end{restatable}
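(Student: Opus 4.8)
The plan is to track, for each arm $i$ and round $t$, the \emph{pull deficit} $d_{i,t} := r_i t - N_{i,t}$, whose one-step evolution is $d_{i,t} = d_{i,t-1} + r_i - \mathbbm{1}[i = i_t]$: every arm's deficit grows by its rate $r_i < 1/k$, while the single pulled arm additionally drops by $1$. The two assertions of the lemma are equivalent to one threshold-counting statement, which I denote $P(t)$: for every integer $m \in \{0,1,\dots,k\}$, at most $k-m$ arms satisfy $d_{i,t} \ge \alpha + m/k$. Indeed, reading off the intervals defining the $M_{\ell,t}$, the set $V_{j,t}$ consists of the arms with $\alpha + (k-j)/k \le d_{i,t} < \alpha+1$, so part (2) follows from the instance $m = k-j$ of $P(t)$; and part (1) is the instance $m=k$ (no arm has deficit $\ge \alpha+1$), which together with the definition of $S_t$ forces every arm into $V_{k,t}\uplus S_t$.

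I would prove $P(t)$ by induction on $t$, with the trivial base case $t=0$ where all deficits equal $0$. For the inductive step I would split on the test $A(t)=\emptyset$. When $A(t)=\emptyset$, every arm has $d_{i,t-1}\le\alpha$, so after one round every deficit is below $\alpha+1/k$ and $P(t)$ holds trivially for all $m\ge1$ (the case $m=0$ being vacuous). The substantive case is $A(t)\neq\emptyset$, in which \falg pulls a maximum-deficit arm $i_t$.

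Here I would isolate two observations. First, the pulled arm is harmless: by the induction hypothesis at level $m=k$ its pre-pull deficit is below $\alpha+1$, and pulling lowers it by $1-r_{i_t} > 1-1/k$, so $d_{i_t,t} < \alpha+1/k$; hence $i_t$ is never counted for any $m\ge1$. Second, any non-pulled arm counted at time $t$ for a given $m\ge1$ must have had, at time $t-1$, deficit exceeding $\alpha+(m-1)/k$ (its deficit rose by only $r_i<1/k$), so it belongs to $B := \{i : d_{i,t-1}\ge \alpha+(m-1)/k\}$, and $|B|\le k-m+1$ by the induction hypothesis at level $m-1$. Since $i_t$ is a maximum-deficit arm, either $i_t\in B$---in which case the counted arms lie in $B\setminus\{i_t\}$, of size at most $k-m$---or $B=\emptyset$ and nothing is counted; either way the count is at most $k-m$, establishing $P(t)$.

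I expect the delicate point to be exactly this final case analysis: a purely local estimate (off-arm deficits changing by at most $1/k$ per round) threatens an off-by-one error, since naively the pulled arm together with the set $B$ could push the count to $k-m+1$. The resolution is the coupling of the two observations above---using the $m=k$ hypothesis to confine the pulled arm below $\alpha+1/k$, and the greedy max-deficit choice to force $B=\emptyset$ whenever $i_t\notin B$---so that $i_t$ can never be the extra element. Getting this interaction right, rather than the routine deficit bookkeeping, is the crux; the lemma (and hence Theorem \ref{theorem: fairness of fAlg}) then follows immediately by specializing $P(t)$ to $m=k$ and $m=k-j$.
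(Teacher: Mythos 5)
Your proof is correct and takes essentially the same route as the paper's: your counting invariant $P(t)$ is an exact restatement of the lemma's two conditions, and your induction rests on the same three facts the paper isolates in its two per-round observations and its inductive Case~2 (unpulled deficits rise by less than $1/k$, the pulled arm's deficit drops below $\alpha + 1/k$, and the greedy max-deficit choice empties every level above the pulled arm so no overflow occurs). The only difference is presentational --- you count arms above thresholds where the paper tracks membership in the partition classes $M_{j,t}$ --- and your explicit split on $A(t)=\emptyset$ is, if anything, slightly more careful.
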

\noindent Condition 1 in Lemma \ref{lem:FairnessLemma} ensures that at any time $t \geq 1$, the $k+1$ sets $M_{1,t}, M_{2,t},\ldots,M_{k,t}, S_t$ form a partition of the set $[k]$ of arms. Hence the arm pulled at the $(t+1)$-th round by the algorithm is from one of these $k+1$ sets. As a part of the proof of Lemma \ref{lem:FairnessLemma}, in Observation \ref{obs:armPulled} we show that if $i_{t+1}$ is the arm pulled at the ($t+1$)-th round then after $t+1$ rounds $i_{t+1} \in M_{k,t+1} \uplus S_{t+1}$. Also in Observation \ref{obs:armNotPulled} we show that if an arm $i \in M_{j,t}$ is not pulled in the ($t+1$)-th round then after $t+1$ rounds arm $i \in M_{j-1,t+1} \uplus M_{j,t+1}$ for all $j \in [2,k]$. We note that the two conditions in Lemma \ref{lem:FairnessLemma} are true after the first round, and then the Observations \ref{obs:armPulled} and \ref{obs:armNotPulled} together ensure that these conditions remain true for all $t>1$. Hence, all arms $i \in [k]$ satisfy $r_{i}t - N_{i,t} < \alpha + 1$ for all $t \geq 1$, which implies $\floor{r_{i}t} - N_{i,t} \leq \alpha$. In particular, we have $\floor{r_{i}t} - N_{i,t} \leq \alpha$, for all $t \geq 1$, for all $i \in [k]$, which by Definition \ref{definition: fairness} proves that \FALG is \aFair. \hfill $\Box$ 

\emph{Proof of Lemma \ref{lem:FairnessLemma}}: 
We begin with two complementary observations and then prove the lemma by induction.
\begin{observation}
\label{obs:armPulled}
Let $i$ be the arm pulled by \FALG in round $t+1$. 

\noindent 1. if $i \in S_t$, then $i \in S_{t+1}$ 

\noindent 2. if $i \in M_{j,t}$ for some $j \in [k]$, then $i \in M_{k,t+1} \uplus S_{t+1}$
\end{observation}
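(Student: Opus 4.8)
The plan is to reduce everything to tracking one scalar per arm, the \emph{deficit} $f_{i,t} := r_i t - N_{i,t}$, since membership of arm $i$ in $S_t$ or in a given $M_{j,t}$ is decided purely by which half-open interval of the real line $f_{i,t}$ occupies. The whole observation then follows from understanding how $f_{i,t}$ moves in a single pull. If arm $i$ is pulled in round $t+1$ then $N_{i,t+1} = N_{i,t}+1$ while the time index advances by one as well, giving the clean update $f_{i,t+1} = f_{i,t} + r_i - 1$. The only extra ingredient I need is the standing assumption $r_i \in [0, \tfrac{1}{k})$, which pins the per-pull increment $r_i - 1$ into $[-1, \tfrac{1}{k}-1)$; informally, pulling an arm strictly lowers its deficit, and lowers it by strictly more than $1 - \tfrac{1}{k}$.

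For part 1, if $i \in S_t$ then $f_{i,t} < \alpha$, and since $r_i - 1 < 0$ we get $f_{i,t+1} < f_{i,t} < \alpha$, so $i \in S_{t+1}$; this case uses only $r_i < 1$. For part 2, if $i \in M_{j,t}$ then in particular $f_{i,t} < \alpha + \tfrac{k-j+1}{k}$, and I would combine this open upper bound with the strict decrease $r_i - 1 < \tfrac{1}{k} - 1$ to obtain
\begin{equation*}
f_{i,t+1} = f_{i,t} + r_i - 1 < \Big( \alpha + \tfrac{k-j+1}{k} \Big) + \Big( \tfrac{1}{k} - 1 \Big) = \alpha + \tfrac{2-j}{k} \leq \alpha + \tfrac{1}{k},
\end{equation*}
where the last step uses $j \geq 1$. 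This single inequality $f_{i,t+1} < \alpha + \tfrac{1}{k}$ is exactly what is required: either $f_{i,t+1} < \alpha$, putting $i$ in $S_{t+1}$, or $\alpha \leq f_{i,t+1} < \alpha + \tfrac{1}{k}$, putting $i$ in $M_{k,t+1}$; in both cases $i \in M_{k,t+1} \uplus S_{t+1}$.

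Since the computation is light, the main thing to watch is the bookkeeping of the half-open intervals: I must invoke the \emph{open} right endpoint of $M_{j,t}$ together with the \emph{strict} inequality $r_i < \tfrac{1}{k}$ so as to land strictly below $\alpha + \tfrac{1}{k}$ rather than merely at it. It is worth flagging that the sharper constraint $r_i < \tfrac{1}{k}$ (as opposed to $r_i < 1$) is genuinely needed only in part 2, and that the argument never uses how $i$ came to be pulled --- it holds whether $i$ was selected by the fairness rule or by \learn($\cdot$) --- which is exactly why the eventual fairness guarantee is independent of the black-box learner. I expect no real obstacle beyond keeping these boundary conventions consistent.
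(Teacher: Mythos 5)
Your proposal is correct and follows essentially the same route as the paper's proof: both track the deficit $r_i t - N_{i,t}$, use the update $+\,r_i - 1$ when the arm is pulled, and in part 2 combine the open upper bound of $M_{j,t}$ with $r_i < \tfrac{1}{k}$ and $j \geq 1$ to land strictly below $\alpha + \tfrac{1}{k}$. The bookkeeping of strict inequalities and the remark that the argument is independent of how $i$ was selected both match the paper.
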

\begin{proof} \phantom{\qedhere}
\textit{Case 1: } $i \in S_t \implies r_{i}t - N_{i,t} < \alpha$. Then after round $t+1$, we have
\begin{align*}
    r_{i}(t+1) - N_{i,t+1} &= r_{i}t + r_{i} - N_{i,t} - 1\\
    & < \alpha - (1 - r_{i})\\
    & < \alpha \tag{Since $1 - r_{i} > 0$}
\end{align*}
$$\implies i \in S_{t+1}$$
\textit{Case 2: } $i \in M_{j,t}$ for some $j \in [k] \implies r_{i}t - N_{i,t} < \alpha + \frac{(k - j + 1)}{k}$. Then after round $t+1$, we have
\begin{align*}
    r_{i}(t+1) - N_{i,t+1} &= r_{i}t + r_{i} - N_{i,t} - 1\\
    & < \alpha + \frac{(k - j + 1)}{k} - (1-r_{i})\\
    & < \alpha -\frac{j}{k}+\frac{1}{k} +r_{i} \\
    & < \alpha + r_{i} ~~<~~ \alpha + \frac{1}{k} \tag{Since $r_{i} < \frac{1}{k}$} \\ 
    \implies i &\in M_{k.t+1}\uplus S_{t+1} \tag*{\qed}
\end{align*}
\end{proof}

\begin{observation}
\label{obs:armNotPulled}
Let $i \in [k]$ be any arm not pulled at time $t+1$.

\noindent 1. If $i \in S_t$, then $i \in S_{t+1} \uplus M_{k,t+1}$

\noindent 2. If $i \in M_{j,t}$ for $j \in [2,k]$, then $i \in M_{j-1,t+1}\uplus M_{j,t+1}$ 
\end{observation}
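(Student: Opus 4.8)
The plan is to reduce everything to tracking a single scalar, the \emph{deficit} $d_{i,t} := r_i t - N_{i,t}$, whose value alone determines which of $S_t, M_{1,t}, \ldots, M_{k,t}$ the arm $i$ belongs to. The one structural fact I need is how $d_{i,t}$ evolves in a round where $i$ is \emph{not} pulled: then $N_{i,t+1} = N_{i,t}$, so $d_{i,t+1} = r_i(t+1) - N_{i,t} = d_{i,t} + r_i$; i.e.\ not pulling $i$ increases its deficit by exactly $r_i$. This is the mirror image of the ``pulled'' update used in Observation~\ref{obs:armPulled} (where pulling decreased the deficit by $1 - r_i$), and together with the standing assumption $0 \le r_i < 1/k$ it is the only ingredient required.

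For claim~1, suppose $i \in S_t$, so $d_{i,t} < \alpha$. Then $d_{i,t+1} = d_{i,t} + r_i < \alpha + 1/k$. Since nothing forces $d_{i,t+1} \ge \alpha$, the updated deficit lies in $(-\infty, \alpha + 1/k)$, which is precisely the range defining $S_{t+1} \uplus M_{k,t+1}$ (``below $\alpha$'' together with ``in $[\alpha, \alpha + 1/k)$''). Hence $i \in S_{t+1} \uplus M_{k,t+1}$. For claim~2, suppose $i \in M_{j,t}$ with $j \in [2,k]$, so by definition $\alpha + \tfrac{k-j}{k} \le d_{i,t} < \alpha + \tfrac{k-j+1}{k}$. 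Adding $r_i$ and using $r_i \ge 0$ on the lower bound and $r_i < 1/k$ on the upper bound gives $\alpha + \tfrac{k-j}{k} \le d_{i,t+1} < \alpha + \tfrac{k-j+2}{k}$. This half-open interval of length $2/k$ is exactly the union of the band for $M_{j,t+1}$, namely $[\alpha + \tfrac{k-j}{k}, \alpha + \tfrac{k-j+1}{k})$, and the band for $M_{j-1,t+1}$, namely $[\alpha + \tfrac{k-j+1}{k}, \alpha + \tfrac{k-j+2}{k})$. Therefore $i \in M_{j-1,t+1} \uplus M_{j,t+1}$, as claimed; note this is where the hypothesis $j \ge 2$ is used, since it is exactly what keeps $M_{j-1}$ well-defined.

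Honestly I expect no deep obstacle here: the statement is a one-line interval computation in each case, structurally identical to the already-proved Observation~\ref{obs:armPulled}. The only thing demanding care is boundary bookkeeping --- keeping each inequality half-open in the correct direction so that endpoints fall into the intended set, and remembering that $r_i < 1/k$ must be applied strictly on the upper bound (this gives the strict ``$<$'' that places $d_{i,t+1}$ strictly below $\alpha + \tfrac{k-j+2}{k}$, so the deficit stays inside $M_{j-1} \uplus M_j$ rather than spilling into $M_{j-2}$). I would present each case as a single chain of inequalities, matching the format of Observation~\ref{obs:armPulled}, and keep in mind that the $j=1$ edge case (not covered here) is instead handled inside the induction for Lemma~\ref{lem:FairnessLemma} via the cardinality bound $|V_{j,t}| \le j$, which forces any arm of maximal deficit to be pulled.
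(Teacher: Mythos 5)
Your proof is correct and follows essentially the same route as the paper: in both cases the argument is that not pulling arm $i$ leaves $N_{i,t}$ unchanged, so the quantity $r_i t - N_{i,t}$ increases by exactly $r_i \in [0,1/k)$, and the resulting interval containment ($< \alpha + 1/k$ in Case~1, and $[\alpha + \tfrac{k-j}{k}, \alpha + \tfrac{k-j+2}{k})$ in Case~2) is precisely the union of the claimed bands. The only cosmetic difference is that you name the deficit $d_{i,t}$ explicitly; the inequalities and the use of $r_i \ge 0$ on the lower bound and $r_i < 1/k$ on the upper bound match the paper's computation exactly.
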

\begin{proof}
\textit{Case 1: } $i \in S_t \implies r_{i}t - N_{i,t} < \alpha$. Then after round $t+1$, we have
\begin{align*}
    r_{i}(t+1) - N_{i,t+1} &= r_{i}t - N_{i,t} + r_{i} \tag{  $N_{i,t+1} = N_{i,t}$}\\
    & < \alpha + r_{i} ~~<~~ \alpha + \frac{1}{k} \tag{Since $r_{i} < \frac{1}{k}$} \\
    \implies & i \in S_{t+1} \uplus M_{k,t+1}
\end{align*}
\textit{Case 2: } $i \in M_{j,t}$ for some $j \in [2,k] \implies \alpha + \frac{k-j}{k} \leq r_{i}t - N_{i,t} < \alpha + \frac{(k - j +1)}{k}$. Then after round $t+1$, we have
\begin{align*}
    r_i(t+1) - N_{i,t+1} &= r_it - N_{i,t} + r_i \\
    & < \alpha + \frac{(k - j + 1)}{k} + r_i\\
    & < \alpha + \frac{(k - j + 1)}{k} + \frac{1}{k} \\
    & = \alpha + \frac{(k - (j - 1) + 1)}{k}
\end{align*}
and~~$ r_{i}t - N_{i,t} + r_{i} \geq \alpha + \frac{k-j}{k} + r_{i} ~~  \geq ~~\alpha + \frac{k-j}{k}$ \vspace{0.1in}

\textcolor{white}{0}~~~~~~~~~~~~~~$\implies i  \in M_{j-1,t+1}\uplus M_{j,t+1}$
\end{proof}
With the above observation we complete the proof of the lemma using induction. 

\noindent\underline{Induction base case ($t = 1$)}: Let $i_1$ be the arm pulled at $t=1$. Then 
$$r_{i_1}t - N_{i_1,1} = r_{i_1} - 1 < 0 \leq \alpha$$
$$\implies i_1 \in S_1$$
For all $i \neq i_1$, we have $r_{i}t - N_{i,1} = r_{i} < \frac{1}{k} \leq \alpha + \frac{1}{k} \implies i \in S_1 \uplus M_{k,1}$. Hence, $ V_{k,1} \uplus S_1 = [k]$, $|V_{k,1}| \leq k-1$, and $|V_{j,1}| = 0$ for all $j \in [k-1]$.
Thus, conditions (1) and (2) of the lemma hold.

\noindent\underline{Inductive Step}: Assuming the conditions in the lemma hold after round $t$, we show that they hold after round $t+1$.

\noindent \textit{Case 1: } $i_{t+1} \in S_t$. From Observation \ref{obs:armPulled}, we know $i_{t+1} \in S_{t+1}$. From Observation \ref{obs:armNotPulled}, we know that for any arm $i \neq i_{t+1}$, $i \in S_{t+1} \uplus M_{k,t+1}$. Hence, $V_{k,t+1} \uplus S_{t+1} = [k]$, $|V_{j,t+1}| = 0$ for all $j \in [k-1]$, and $|V_{k,t+1}| \leq k - 1$.
Thus, Conditions (1) and (2) in the lemma hold after round $t+1$.\\
\textit{Case 2: } $i_{t+1} \in M_{a,t}$, for some $a \in [k]$. 
\begin{align*}
    i_{t+1} &\in M_{a,t} \implies i_{t+1} \in V_{a,t}\\
    \implies |V_{j,t}| &= 0 \text{\hspace{0.3cm} for all $j \in [1,a-1]$ if $a>1$} 
\end{align*}
From Observation \ref{obs:armPulled}, we know $i_{t+1} \in S_{t+1} \uplus M_{k,t+1}$, and from Observation \ref{obs:armNotPulled}, we infer that $V_{j-1,t+1} = V_{j,t}\setminus \{i_{t+1}\}$ for all $j\in [2,k$]. Also,
\begin{align*}
    |V_{j,t} \setminus \{i_{t+1}\}| &\leq j-1 \text{\hspace{0.5cm} for all $j \in [a,k]$}\\
    \implies |V_{j,t+1}| &\leq j \text{\hspace{0.5cm} for all $j \in [k]$}
\end{align*}
Also, $V_{k,t+1} \uplus S_{t+1} = [k]$. Hence, Conditions (1) and (2) of the lemma hold after round $t+1$. \hfill $\Box$ 

\noindent\textbf{Proof of Theorem \ref{theorem: FUCB regret}}\\
The regret analysis of \FUCB builds on the regret analysis of \ucb1 which we give in the Appendix \ref{subsec:ucbProof}. In Appendix \ref{subsec:ucbProof} we also introduce the notations used in this proof. The \ucb1 estimate of the mean of arm $i$ denoted as $\Bar{\mu}_{i}(t) = \hat{\mu}_{i,N_{i,t-1}}(t-1) + c_{t,N_{i,t-1}}$, where $\hat{\mu}_{i,N_{i,t-1}}(t-1)$ is the empirical estimate of the mean of arm $i$ when it is pulled $N_{i,t-1}$ times in $t-1$ rounds and $c_{t,N_{i,t-1}} = \sqrt{\frac{2\ln t}{N_{i,t-1}}}$ is the confidence interval of the arm $i$ at round $t$. Similar to the analysis of the \ucb1 algorithm, we upper bound the expected number of times a sub-optimal arm is pulled. We do this by considering two cases dependent on the number of times the sub-optimal arm is required to be pulled for satisfying its fairness constraint.

\noindent \underline{Case 1: } Let $i \neq 1$ and $r_i \cdot T - \alpha \geq \frac{8 \ln T}{\Delta_i^2}$. Then 
\begin{align*}
    \mathbb{E}[N_{i,T}] \leq& \big(r_i \cdot T -\alpha\big) + \sum_{t = 1}^T \mathbbm{1}\{i_t = i, N_{i,t-1} \geq r_i \cdot T - \alpha\}\\
    \leq& \big(r_i \cdot T -\alpha\big)  \\   + \sum_{t=1}^\infty  \sum_{s_1=1}^t & \sum_{s_i=r_i \cdot T -\alpha}^t \mathbbm{1}{\Big\{ \hat{\mu}_{1,s_1}(t)  + c_{t,s_1} \leq \hat{\mu}_{1,s_i}(t) + c_{t,s_i} \Big\}} \tag{Follows from Appendix A, Theorem 6}
\end{align*}
Since $r_i \cdot T -\alpha \geq \frac{8 \ln T}{\Delta_i^2}$, it follows from the proof of Theorem 6 in Appendix A that $\mathbb{E}[N_{i,T}] \leq r_i \cdot T - \alpha + \Big( 1 + \frac{\pi^2}{3} \Big)$. Hence, $\mathbb{E}[N_{i,T}] - \big(r_i \cdot T -\alpha\big) \leq \Big( 1 + \frac{\pi^2}{3} \Big)$. 

\noindent\underline{Case 2: }Let $i \neq 1$ and $r_i \cdot T < \frac{8 \ln T}{\Delta_i^2}$\\
Then the proof of Theorem 6 in Appendix A can be appropriately adapted to show that $\mathbb{E}[N_{i,T}] \leq \frac{8 \ln T}{\Delta_i^2} + \Big( 1 + \frac{\pi^2}{3} \Big)$. Hence
$$
    \mathbb{E}[N_{i,T}] - \big(r_i \cdot T - \alpha \big) 
    \leq \frac{8 \ln T}{\Delta_i^2} + \Big( 1 + \frac{\pi^2}{3} \Big) - \big(r_i \cdot T - \alpha \big)
$$
Suppose $S(T) = \Big\{i \in [k] ~\big|~ r_i \cdot T - \alpha < \frac{8 \ln T}{\Delta_i^2}\Big\}$. Then from the two cases discussed above, we can conclude that
\begin{align*}
    \mathcal{R}_{\fucb}^{r}(T) \leq& \Big(1 + \frac{\pi^2}{3} \Big) \cdot \sum_{i \in [k]}\Delta_i  \\  & +  \sum_{i \in S(T),  i \neq 1}   \Delta_i \cdot \bigg( \frac{8 \ln T}{\Delta_i^2} - \big(r_i \cdot T -\alpha \big)\bigg)  
\end{align*}

Hence, $\mathcal{R}_{\fucb}^{r}(T) = O(\sum_{i \neq 1}\frac{\ln T}{\Delta_i})$. \hfill $\Box$
\section{Experimental Results}
\label{sec:simulation}
In this section we show the results of simulations that validate our theoretical findings. First, we represent the cost of fairness by showing the trade-off between regret and fairness with respect to the \TOL $\alpha$. Second, we evaluate the performance of our algorithms in terms of \rRegret and fairness guarantee by comparing them with the algorithm by \cite{LLJ19}, called Learning with Fairness Guarantee(\lfg), as a baseline. Note that in Figure 3, cumulative regret is plotted on a logarithmic scale. The rationale behind the choice of instance parameters is discussed in Appendix \ref{app: simulation param}.

\noindent\textbf{Trade-off: Fairness vs. Regret}\\
\noindent We consider the following \FSMAB instance: $k = 10$, $\mu_1 = 0.8$, and $\mu_i = \mu_1 - \Delta_i$, where $\Delta_i = 0.01i$, and $r = (0.05, 0.05, \ldots , 0.05) \in [0,1]^k$. We show the results for $T = 10^6$. Figure 2 shows the trade-off between regret in terms of the conventional regret and maximum fairness violation equal to $\textnormal{max}_{i \in [k]} r_it -N_{i,t}$, with respect to $\alpha$, and this in particular captures the \emph{cost of fairness}. As can be seen, the regret decreases, and maximum fairness violation increases respectively as $\alpha$ increases till a threshold for $\alpha$ is reached. For values of $\alpha$ less than this threshold the fairness constraints cause some sub-optimal arms to be pulled more than the number of times required to determine its mean reward with sufficient confidence. On the other hand, for values of $\alpha$ more than this threshold, the regret reduces drastically, and we recover logarithmic regret as could be expected from the classical \ucb1 algorithm. Note that the threshold for $\alpha$ in this case is problem-dependent.

\begin{figure*}[ht!]
\begin{tabular}{ccc}
\hspace{-8mm}
\includegraphics[scale=0.25]{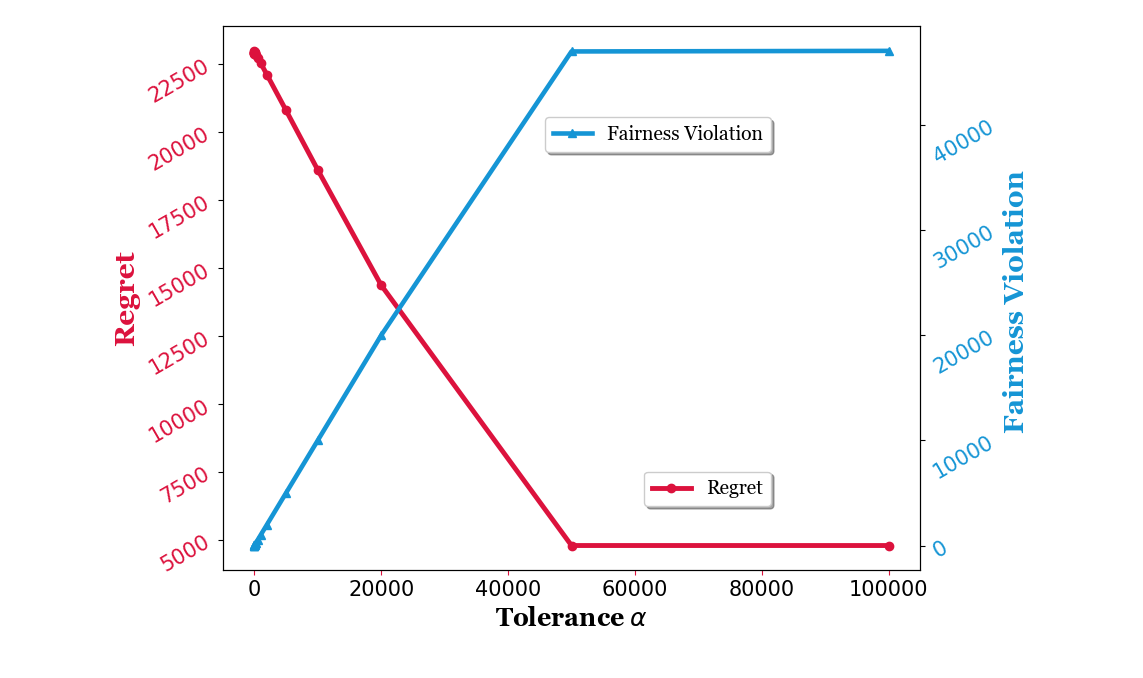}
     &  
     \hspace{-8mm}
     \includegraphics[scale=0.35]{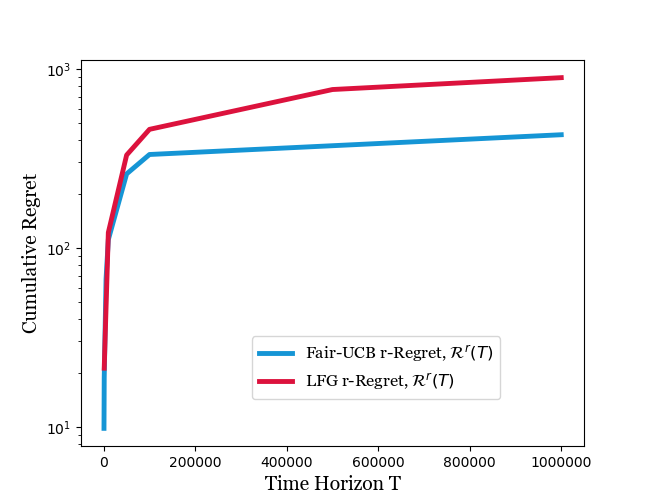}
     & 
     \hspace{-8mm}
     \includegraphics[scale=0.25]{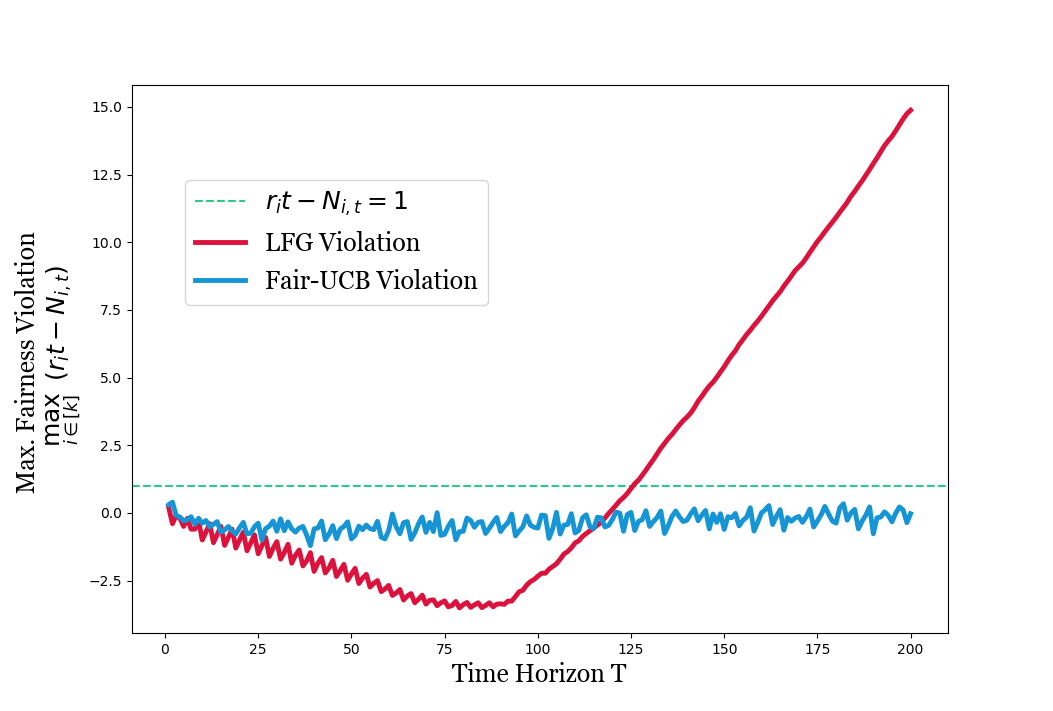}
     \\
     \textbf{Figure 2}: Cost of Fairness
     &
     \textbf{Figure 3}: \rRegret Guarantee
     &
     \textbf{Figure 4}: Fairness Guarantee
\end{tabular}
    \centering
    \label{fig:my_label}
\end{figure*}


\noindent\textbf{Comparison: \FUCB vs. \lfg:}
The work closest to ours is the one by \cite{LLJ19} and their algorithm, which is called \emph{Learning with Fairness Guarantee} (\lfg), is used as a baseline in the following simulation results. The simulation parameters that we consider for comparing \rRegret are the same as in the previous instance. Figure 3 shows the plot of time vs. \rRegret for \FUCB and \lfg. Note that \FUCB and \LFG perform comparably in terms of the \rRegret suffered by the algorithm. Also, the simulation results validate our theoretical result of logarithmic \rRegret bound.

We next compare fairness guarantee of \FUCB with that of \lfg. We consider an instance with $k = 3$, $\mu = (0.7,0.5,0.4)$, $r = (0.2,0.3,$ $0.25)$ and, $\alpha = 0$. 
Figure 4 shows the plot of time vs. maximum fairness violation. Observe that the fairness guarantee of \FUCB holds uniformly over the time horizon $T$. Note that, though the fairness violation for \LFG appears to be increasing, it does reduce at some point and go to zero which guarantees asymptotic fairness. To summarize, the simulation result reaffirm our theoretical guarantees for both fairness and \rRegret of \FALG in general, and \FUCB in particular.
\section{Related Work}
\label{sec:relWork}
There has been a surge in research efforts aimed at ensuring fairness in  decision making by machine learning algorithms such as classification algorithms \citep{AGA18,NAR18,ZAF17a,ZAF17}, regression algorithms \citep{BERK17,REZ19}, ranking and recommendation systems \citep{SIN19AXRV,BEU19AXRV,SIN18,CEL17,ZEH17}, etc. This is true even in the context of online learning, particularly in the \SMAB setting. We state these relevant works below.

\citet{JKMR16} propose a variant of the $\UCB$ algorithm that ensures what they call meritocratic fairness i.e. an arm is never preferred over a better arm irrespective of the algorithm's confidence over the mean reward of each arm. This guarantees individual fairness (see \citet{DWORK12}) for each arm while achieving efficiency in terms of sub-linear regret. 
The work by \citet{LRDMP17} aims at ensuring ``treatment equality", wherein similar individuals are treated similarly. \citet{GCKR18} consider individual fairness guarantees with respect to an unknown fairness metric.

The papers discussed above combine the conventional goal of maximizing cumulative reward with that of simultaneously satisfying some additional constraints. \SMAB problem with other added constraints have been considered. For example, \citet{BAD13,immorlica2018adversarial} study the \SMAB with knapsack constraints, where the number of times that a particular arm can be pulled is limited by some budget. The works of \citet{XIA15,amin2012budget,tran2014efficient} consider the \SMAB problem in which there is some cost associated with pulling each arm, and the learner has a fixed budget. The work by \citet{LATTIMORE2014,LATTIMORE15,TAL18} investigates bandit optimization problems with resource allocation constraints.

\textbf{Comparison with \citet{LLJ19}}: 
In addition to proving a $O(\sqrt{T\ln T})$ distribution-free \rRegret bound as in \citet{LLJ19}, we show a $O(\ln T)$ \rRegret bound with finer dependence on the instance parameters.  Our fairness guarantee holds uniformly over time and hence is much stronger than the asymptotic fairness guarantee of \lfg. Moreover, as our fairness guarantee is independent of the learning algorithm used in \falg, it holds for the setting considered in \cite{LLJ19}.

\textbf{Comparison with \citet{CELIS2018}}: A recent work by \citet{CELIS2018} considers a personalized news feed setting, where at any time $t$, for a given context (user), the arm (i.e. ad to be displayed) is sampled from a distribution $p_t$ over the set $[k]$ of arms (ads) and fairness is achieved by ensuring a pre-specified probability mass on each arm which restricts the allowable set of distributions to a subset of the simplex. The algorithm in \cite{CELIS2018} when applied the classical stochastic multi-armed bandit setting considered by us, ensures any-time fairness only in \emph{expectation} over the random pulls of arms by the algorithm. In contrast, our algorithm (Theorem \ref{theorem: fairness of fAlg}) provides much stronger deterministic any-time fairness guarantee. Further, we also provide an explicit trade-off (in terms of the unfairness tolerance $\alpha$), between fairness and regret. Also, the computational overhead of our algorithm is just $O(1)$, whereas the algorithms in \cite{CELIS2018} need to solve at least one linear program in each round. We also note that our model can directly be adapted to capture the setting in \cite{CELIS2018}.
\section{Discussion and Future Work}
\label{sec:discussion}
The constraints considered in this paper capture fairness by guaranteeing a minimum fraction of pulls to each arm at all times. There are many situations where such fairness constraints are indispensable, and in such cases the \rRegret notion compares the expected loss of any online algorithm with the expected loss of an optimal algorithm that also satisfies such fairness constraints. An important feature of our proposed meta algorithm \FALG is the uniform time fairness guarantee that it provides independent of the learning algorithm used. We also elucidate the cost of satisfying such fairness constraints by evaluating the trade-off between the conventional regret and fairness in terms of an unfairness tolerance parameter. Several notions of fairness such as disparate impact, statistical parity, equalized odds, etc. have been extensively studied in the machine learning literature (see \cite{barocasfairml}). Incorporating such fairness notions in online learning framework, as done by \citet{BLUM2018,BLUM2019,BECHAVOD2019}, is an exciting future direction.

\section*{Acknowledgement}
We thank Prof. Siddharth Barman for his useful insights and for helping us improve the presentation of the paper. We also thank Prof. Krishna Gummadi for pointing us to the Facebook lawsuit. Vishakha Patil is grateful for the travel support provided by Google via Google Student Travel Grant. Ganesh Ghalme gratefully acknowledges the travel support by Tata Trusts. Finally, the authors thank the anonymous reviewers for their helpful comments.
\bibliographystyle{unsrtnat}
\bibliography{references.bib}
\newpage 
\appendix
\label{appendix}

\section{Preliminaries}
\label{sec:prelims}

\begin{subsection}{Hoeffding's Lemma}
\begin{theorem}
Let $X_1,X_2, \ldots , X_n $ be i.i.d. random variables with $X_i \in [a, b]$ and $\mathbbm{E}[X_i] = \mu $.
Then $$ 
Pr \bigg(~\biggl\vert\frac{1}{n} \sum_{i=1}^{n}X_i - \mu_i \biggr\vert \geq \epsilon \bigg) \leq 2 e^{\frac{-2n\epsilon^2}{(b-a)^2}} 
$$
\end{theorem}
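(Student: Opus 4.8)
The plan is to prove this concentration bound by the standard Chernoff--Cram\'er exponential moment method: bound the tail probability by a moment generating function (MGF), factorize the MGF using independence, control each factor via Hoeffding's Lemma proper, and then optimize over the free exponential parameter. I would first establish the one-sided bound for $\Pr\big(\frac{1}{n}\sum_i X_i - \mu \geq \epsilon\big)$ and recover the two-sided statement at the end by running the identical argument on the variables $-X_i$ and union-bounding the two tails, which is exactly what produces the leading factor of $2$.

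First I would apply the Chernoff step. For any $s > 0$, monotonicity of $x \mapsto e^{sx}$ together with Markov's inequality gives
$$\Pr\Big(\sum_{i=1}^{n}(X_i - \mu) \geq n\epsilon\Big) \leq e^{-sn\epsilon}\,\mathbbm{E}\Big[e^{s\sum_{i}(X_i-\mu)}\Big],$$
and by independence of the $X_i$ the expectation factorizes as $\prod_{i=1}^{n}\mathbbm{E}[e^{s(X_i-\mu)}]$. This reduces everything to a per-variable MGF estimate.

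The crux is that per-variable bound, i.e. Hoeffding's Lemma proper: for a mean-zero variable $Y = X_i - \mu$ supported on an interval of length $b-a$, one has $\mathbbm{E}[e^{sY}] \leq e^{s^2(b-a)^2/8}$. Writing $\psi(s) = \log\mathbbm{E}[e^{sY}]$, I would check $\psi(0)=0$ and $\psi'(0)=\mathbbm{E}[Y]=0$, and observe that $\psi''(s)$ is the variance of $Y$ under the exponentially tilted law with density proportional to $e^{sY}$. Since that tilted law is still supported on an interval of length $b-a$, its variance is at most $(b-a)^2/4$ by Popoviciu's inequality, so a second-order Taylor expansion gives $\psi(s) \leq \tfrac{s^2}{2}\cdot\tfrac{(b-a)^2}{4} = \tfrac{s^2(b-a)^2}{8}$. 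This variance-of-the-tilt estimate is the main obstacle and the only genuinely non-mechanical step; everything else is bookkeeping.

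Finally I would combine and optimize. Substituting the lemma into the factorized MGF yields
$$\Pr\Big(\sum_{i=1}^{n}(X_i-\mu)\geq n\epsilon\Big) \leq \exp\Big(-sn\epsilon + \tfrac{n s^2 (b-a)^2}{8}\Big).$$
The exponent is minimized at $s^\star = 4\epsilon/(b-a)^2$, where it equals $-2n\epsilon^2/(b-a)^2$; rescaling the event by $n$ gives $\Pr\big(\frac{1}{n}\sum_i X_i - \mu \geq \epsilon\big) \leq e^{-2n\epsilon^2/(b-a)^2}$. Adding the symmetric lower-tail bound obtained by replacing $X_i$ with $-X_i$ doubles this, producing the claimed $2e^{-2n\epsilon^2/(b-a)^2}$.
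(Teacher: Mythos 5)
Your proposal is correct: the Chernoff--Cram\'er step, the factorization by independence, Hoeffding's Lemma proper via the second derivative of the log-MGF under the tilted measure (bounded by Popoviciu's inequality), the optimization at $s^\star = 4\epsilon/(b-a)^2$, and the union bound over the two tails are all carried out accurately and yield exactly the stated bound. The paper states this theorem only as a standard preliminary and gives no proof of its own, so there is nothing to compare against; your argument is the canonical one and would serve as a complete proof. (As a side note, the paper's statement contains a typo, writing $\mu_i$ where it means $\mu$; your proof correctly targets the intended quantity.)
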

\end{subsection}

\begin{subsection}{Upper Confidence Bound (\ucb) based Algorithm}
\label{subsec:ucbProof}
In this section we describe the \ucb1 algorithm that was introduced by \cite{AUER02} and for completeness we also give a proof of its regret bound. In the \ucb1 algorithm for each arm the algorithm maintains a \ucb1 estimate and at each round the algorithm plays the arm with the highest \ucb1 estimate. Such a \ucb1 estimate for an arm $i \in [k]$ at round $t$ is dependent on the empirical mean of the rewards of arm $i$ and a confidence interval associated with arm $i$. To state it formally let $N_{i,t-1}$ denote the number of times arm $i$ is pulled in $t-1$ rounds. Then the \ucb1 estimate for arm $i \in [k]$ at round $t \geq 1$~ is $\Bar{\mu}_{i}(t) = 0$ if $N_{i,t-1} = 0$, otherwise $\Bar{\mu}_{i}(t) = \hat{\mu}_{i,N_{i,t-1}}(t-1) + \sqrt{\frac{2\ln (t)}{N_{i,t-1}}}$~ where $\hat{\mu}_{i,N_{i,t-1}}(t-1)$ is the empirical mean of the rewards of arm $i$ after being pulled $N_{i,t-1}$ times in $t-1$ rounds and $\sqrt{\frac{2\ln (t)}{N_{i,t-1}}}$ is its associated confidence interval. For ease of notation, we will denote by $c_{t,s_i}$ the confidence interval of arm $i$ at time $t$ when it is pulled $s_i$ times i.e. $c_{t,s_i} = \sqrt{\frac{2\ln (t)}{s_i}}$ . Technically for the first $k$ rounds the algorithm plays each arm once to compute a non-zero \ucb1 estimate for each arm and for every round $t \geq k+1$ it plays the arm with the highest \ucb1 estimate. The total expected regret of \ucb1 after $T$ rounds is given by the following theorem, where $\Delta_i = \mu_1 - \mu_i$ for all $i \in [k]$, and $\Delta_i > 0$ as $\mu_1 > \mu_i$ for $i \neq 1$.
\begin{theorem}\label{theorem: UCB}
For the \SMAB problem, the \ucb1 has expected regret $\mathbb{E}[\mathcal{R}_{\textnormal{\ucb}}(T)] \leq \sum_{i \neq 1} \big(\frac{8 \ln T}{\Delta_i} \big) + \big(1 + \frac{\pi^2}{3}\big) \sum_{i \in [k]} \Delta_i$ .
\end{theorem}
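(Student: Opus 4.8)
The plan is to reduce the regret bound to a per-arm bound on the expected number of pulls and then run the standard confidence-interval argument. By Definition \ref{def:smab regret} we have $\mathcal{R}_{\ucb}(T) = \sum_{i \neq 1} \Delta_i \cdot \mathbb{E}[N_{i,T}]$, so since $\Delta_1 = 0$ it suffices to show that every suboptimal arm $i \neq 1$ satisfies $\mathbb{E}[N_{i,T}] \leq \frac{8\ln T}{\Delta_i^2} + \big(1 + \frac{\pi^2}{3}\big)$; summing $\Delta_i$ times this bound then yields the claimed expression. First I would fix a suboptimal arm $i$ and a threshold $\ell = \lceil 8\ln T/\Delta_i^2 \rceil$, and split the count as $N_{i,T} \leq \ell + \sum_{t} \mathbbm{1}\{i_t = i,\, N_{i,t-1} \geq \ell\}$: the first $\ell$ pulls are charged directly, and I only need to control pulls that occur after arm $i$ has already been sampled at least $\ell$ times.

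The core step is to observe that whenever \ucb1 pulls $i_t = i$, the selection rule forces $\hat\mu_{1,s_1}(t-1) + c_{t,s_1} \leq \hat\mu_{i,s_i}(t-1) + c_{t,s_i}$, where $s_1 = N_{1,t-1}$ and $s_i = N_{i,t-1}$. I would then decompose this event into three sub-events whose union contains it: (a) the optimal arm's index underestimates, $\hat\mu_{1,s_1}(t-1) + c_{t,s_1} \leq \mu_1$; (b) arm $i$'s empirical mean overestimates, $\hat\mu_{i,s_i}(t-1) \geq \mu_i + c_{t,s_i}$; and (c) the confidence width is too large, $\mu_1 < \mu_i + 2 c_{t,s_i}$. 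The point is that if none of (a), (b), (c) holds then $\hat\mu_{1,s_1}(t-1) + c_{t,s_1} > \mu_1 \geq \mu_i + 2c_{t,s_i} > \hat\mu_{i,s_i}(t-1) + c_{t,s_i}$, contradicting the selection inequality.

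Next I would dispose of the three sub-events. Event (c) unfolds to $\Delta_i < 2\sqrt{2\ln t / s_i}$, i.e. $s_i < 8\ln t/\Delta_i^2$; since we restrict to $s_i \geq \ell \geq 8\ln T/\Delta_i^2 \geq 8\ln t/\Delta_i^2$ for every $t \leq T$, event (c) is impossible, which is exactly the reason for choosing $\ell$ this way. For (a) and (b), Hoeffding's Lemma (Theorem~1) with $c_{t,s} = \sqrt{2\ln t/s}$ gives $\Pr[\text{(a)}] \leq e^{-2 s_1 c_{t,s_1}^2} = t^{-4}$ and likewise $\Pr[\text{(b)}] \leq t^{-4}$. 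Summing a union bound over the ranges $s_1 \in [1,t]$ and $s_i \in [\ell,t]$ and then over all $t$ gives
\begin{align*}
\mathbb{E}[N_{i,T}] \leq \ell + \sum_{t=1}^{\infty} \sum_{s_1=1}^{t}\sum_{s_i=1}^{t} 2 t^{-4} = \ell + \sum_{t=1}^{\infty} 2 t^{-2} = \ell + \frac{\pi^2}{3},
\end{align*}
and using $\ell \leq 8\ln T/\Delta_i^2 + 1$ yields the per-arm bound. I expect the only delicate point to be the three-way event decomposition together with verifying that the choice of $\ell$ eliminates event (c) uniformly for $t \leq T$; once that is in place, the Hoeffding estimates and the $\sum t^{-2} = \pi^2/6$ summation are routine.
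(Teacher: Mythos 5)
Your proposal is correct and follows essentially the same route as the paper's proof: the same split of $N_{i,T}$ at the threshold $\ell=\lceil 8\ln T/\Delta_i^2\rceil$, the same three-event decomposition of the selection inequality, the same Hoeffding bounds of $t^{-4}$ on the two deviation events, and the same observation that the choice of $\ell$ rules out the large-confidence-width event for all $t\leq T$. The only cosmetic difference is that you attribute the additive $+1$ explicitly to the ceiling in $\ell$, which is if anything slightly cleaner bookkeeping than the paper's.
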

\begin{proof}
To bound the regret of the \ucb1 algorithm, we first upper bound $\mathbb{E}[N_{i,T}]$ for $i \neq 1$, i.e. the expected number of pulls of a sub-optimal arm $i \neq 1$ in $T$ rounds. Denote the arm pulled by the algorithm at the $t$-th round as $i_t$. In the equation below $\mathbbm{1}\{i_{t} = i\}$ is an indicator random variable that is equal to $1$ if $i_t = i$ and is $0$ otherwise. In general $\mathbbm{1}\{\mathsf{E}\}$ denotes an indicator random variable that is equal to $1$ if the event $\mathsf{E}$ is true and is $0$ otherwise.
$$ N_{i,T} = 1 + \sum_{t = k+1}^T \mathbbm{1}\Big\{i_{t} = i\Big\} $$
For any positive integer $\ell$ we may rewrite the above equation as 
\begin{equation}\label{equation: renaming the upper on sub-optimal pulls}
N_{i,T}  \leq \ell + \sum_{t = \ell}^T \mathbbm{1}\Big\{i_{t} = i, N_{i,t-1} \geq \ell\Big\} 
\end{equation}
If $i_t = i$ then $\Bar{\mu}_{1}(t) < \Bar{\mu}_{i}(t)$~ i.e.~ $\hat{\mu}_{1,N_{1,t-1}}(t-1) + c_{t,N_{1,t-1}} < \hat{\mu}_{i,N_{i,t-1}}(t-1) + c_{t,N_{i,t-1}}$~. Hence from Equation \ref{equation: renaming the upper on sub-optimal pulls}
\begin{align*}
\small 
    N_{i,T} & \leq \ell +  \sum_{t = \ell}^T \mathbbm{1}\Big\{ \hat{\mu}_{1,N_{1,t-1}}(t-1) + c_{t,N_{1,t-1}}  \\ & \hspace{60pt}< \hat{\mu}_{i,N_{i,t-1}}(t-1) + c_{t,N_{i,t-1}}~, ~N_{i,t-1} \geq \ell \Big\}\\
    & \leq \ell +\sum_{t = \ell}^T \mathbbm{1}\Big\{ \min_{0<s_1< t} ~\hat{\mu}_{1,s_1}(t-1) + c_{t,s_1} \\ & \hspace{60pt} <  \max_{\ell\leq s_i < t}~\hat{\mu}_{i,s_i}(t-1) + c_{t,s_i} \Big\}\\
    & \leq \ell  + \sum_{t = \ell}^T ~\sum_{s_1=1}^t ~\sum_{s_i=\ell}^t \mathbbm{1}\Big\{ \hat{\mu}_{1,s_1}(t-1) + c_{t,s_1}  \\  & \hspace{60pt} < \hat{\mu}_{i,s_i}(t-1) + c_{t,s_i} \Big\}
\end{align*}
At time $t$, $\hat{\mu}_{1,s_1}(t-1) + c_{t,s_1} < \hat{\mu}_{i,s_i}(t-1) + c_{t,s_i}$ implies that at least one of the following events is true
\begin{align}
    \big\{\hat{\mu}_{1,s_1}(t-1) \leq \mu_1 - c_{t,s_1}\big\} \label{eq:EventA} \\
    \big\{\hat{\mu}_{i,s_i}(t-1) \geq \mu_i + c_{t,s_i}\big\} \label{eq:EventB}\\
    \big\{\mu_1 < \mu_i + 2c_{t,s_i}\big\} \label{eq:EventC}
\end{align}
The probability of the events in Equations \ref{eq:EventA} and \ref{eq:EventB} can be bounded using Hoeffding's inequality as:
\begin{align*}
    \mathbb{P}\Big(\big\{\hat{\mu}_{1,s_1}(t-1) \leq \mu_1 - c_{t,s_1}\big\}\Big) \leq t^{-4}\\
    \mathbb{P}\Big(\big\{\hat{\mu}_{i,s_i}(t-1) \geq \mu_i + c_{t,s_i}\big\}\Big) \leq t^{-4}\\
\end{align*}
The event in equation \ref{eq:EventC}~ $\big\{\mu_1 < \mu_i + 2c_{t,s_i}\big\}$ can be written as $\Big\{\mu_1 - \mu_i - 2\sqrt{\frac{2\ln t}{s_i}} < 0 \Big\}$. Substituting $\Delta_i = \mu_1 - \mu_i$ and if $s_i \geq \Big\lceil{\frac{8\ln T}{\Delta_i^2}\Big\rceil} \geq \Big\lceil{\frac{8\ln t}{\Delta_i^2}\Big\rceil}$ then
\begin{align}
    \mathbb{P}\Bigg( \bigg\{\Delta_i - 2\sqrt{\frac{2\ln t}{s_i}} < 0 \bigg\}   \Bigg) = 0
\end{align}
Thus if $\ell = \frac{8\ln T}{\Delta_i^2}$ then
\begin{align*}
    N_{i,T} & \leq \frac{8\ln T}{\Delta_i^2} + \sum_{t=\frac{8\ln T}{\Delta_i^2}}^T~\sum_{s_1=1}^t~\sum_{s_i=\frac{8\ln T}{\Delta_i^2}}^t  \mathbbm{1}\Big\{ \hat{\mu}_{1,s_1}(t-1) \\ & \hspace{50pt} + c_{t,s_1} < \hat{\mu}_{i,s_i}(t-1) + c_{t,s_i} \Big\}\\
    \mathbb{E}[N_{i,T}] & \leq \frac{8\ln T}{\Delta_i^2} +  \sum_{t=\frac{8\ln T}{\Delta_i^2}}^T ~\sum_{s_1=1}^t~\sum_{s_i=\frac{8\ln T}{\Delta_i^2}}^t 2t^{-4} \\ & ~~\leq ~~~\frac{8\ln T}{\Delta_i^2} \sum_{t=\frac{8\ln T}{\Delta_i^2}}^\infty~\sum_{s_1=1}^t~\sum_{s_i=\frac{8\ln T}{\Delta_i^2}}^t 2t^{-4}\\
    & \leq \frac{8\ln T}{\Delta_i^2} + 1 + \frac{\pi^2}{3}  ~~~~~~~~~~~~~~\tag{as $ \sum_{t=\frac{8\ln T}{\Delta_i^2}}^\infty~\sum_{s_1=1}^t~\sum_{s_i=\frac{8\ln t}{\Delta_i^2}}^t 2t^{-4} ~~~\leq ~~~ 1 + \frac{\pi^2}{3}  $ }
\end{align*}
Recall from Section \ref{sec:model}, Equation \ref{def:smab regret}, that
\begin{align*}
    \mathbb{E}[\mathcal{R}_{\mathcal{\textnormal{\ucb}}}(T)] &= \sum_{i \in [k]}{\Delta_i \cdot \mathbb{E}[N_{i,T}]}\\
    & \leq \sum_{i \neq 1} \frac{8 \ln T}{\Delta_i} + \Big(1 + \frac{\pi^2}{3} \Big) \cdot \sum_{i \in [k]}\Delta_i
\end{align*}
\end{proof}

\end{subsection}

\subsection{Distribution-free Regret Bound for \ucb1 }
\label{sec:instance independent regret of ucb}
\begin{restatable}{theorem}{ucbInstanceIndependent}
\label{thm: ucb instance independent}
For the \SMAB problem, the \ucb1 has expected (distribution-free) regret $\mathbb{E}[\mathcal{R}_{\textnormal{\ucb}}(T)] = O\big(\sqrt{T \ln T}\big)$.
\end{restatable}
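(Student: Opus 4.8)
The plan is to convert the distribution-dependent regret bound into a distribution-free one by a standard gap-splitting argument. The starting point is the per-arm bound established in the proof of Theorem~\ref{theorem: UCB}, namely that for every sub-optimal arm $i \neq 1$ we have $\mathbb{E}[N_{i,T}] \leq \frac{8\ln T}{\Delta_i^2} + 1 + \frac{\pi^2}{3}$, and hence $\Delta_i \cdot \mathbb{E}[N_{i,T}] \leq \frac{8\ln T}{\Delta_i} + \Delta_i\big(1 + \frac{\pi^2}{3}\big)$. This bound degenerates when $\Delta_i$ is small, so I would pair it with the trivial identity $\sum_{i \in [k]} N_{i,T} = T$.

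First I would fix a threshold $\Delta > 0$ (to be chosen at the end) and partition the sub-optimal arms into a large-gap set $\{i : \Delta_i > \Delta\}$ and a small-gap set $\{i : \Delta_i \leq \Delta\}$. For the small-gap arms I would bound their total contribution to the regret by $\sum_{i : \Delta_i \leq \Delta} \Delta_i \, \mathbb{E}[N_{i,T}] \leq \Delta \sum_{i \in [k]} \mathbb{E}[N_{i,T}] \leq \Delta T$, using that the total number of pulls across all arms is exactly $T$. For the large-gap arms I would apply the per-arm bound above; since $\Delta_i > \Delta$ gives $\frac{1}{\Delta_i} < \frac{1}{\Delta}$ and there are at most $k$ such arms, their contribution is at most $\frac{8k\ln T}{\Delta} + \big(1 + \frac{\pi^2}{3}\big)\sum_{i \in [k]} \Delta_i \leq \frac{8k\ln T}{\Delta} + \big(1 + \frac{\pi^2}{3}\big)k$, where the final step uses $\Delta_i \leq 1$ for Bernoulli rewards.

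Combining the two cases, the expected regret is at most $\Delta T + \frac{8k\ln T}{\Delta} + \big(1 + \frac{\pi^2}{3}\big)k$. The last step is to optimize over the free parameter $\Delta$: the two leading terms $\Delta T$ and $\frac{8k\ln T}{\Delta}$ are balanced by taking $\Delta = \sqrt{8k\ln T / T}$, which makes each of them equal to $\sqrt{8kT\ln T}$. This yields $\mathbb{E}[\mathcal{R}_{\textnormal{\ucb}}(T)] \leq 2\sqrt{8kT\ln T} + \big(1 + \frac{\pi^2}{3}\big)k = O(\sqrt{T\ln T})$, as claimed.

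There is no serious obstacle here; the argument is elementary once the distribution-dependent bound is in hand. The only point requiring care, and the conceptual heart of the proof, is the observation that neither bound on $\mathbb{E}[N_{i,T}]$ suffices on its own: the logarithmic bound blows up as $\Delta_i \to 0$, while the trivial bound $N_{i,T} \leq T$ is too weak for well-separated arms. Splitting at a threshold and then tuning it to equalize the two resulting error terms is precisely what removes the dependence on the (possibly tiny) gaps and produces the $\sqrt{T}$ rate.
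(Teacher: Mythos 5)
Your proof is correct, but it takes a genuinely different route from the one in the paper. You convert the distribution-dependent bound $\mathbb{E}[N_{i,T}] \leq \frac{8\ln T}{\Delta_i^2} + 1 + \frac{\pi^2}{3}$ into a distribution-free one by splitting the arms at a gap threshold $\Delta$, bounding the small-gap arms by $\Delta T$ via $\sum_i N_{i,T} = T$ and the large-gap arms by $\frac{8k\ln T}{\Delta} + (1+\frac{\pi^2}{3})k$, and then optimizing $\Delta = \sqrt{8k\ln T/T}$; all of these steps check out and give $O(\sqrt{kT\ln T})$. The paper instead argues directly via a clean event $C$ under which every empirical mean lies within its confidence radius for all arms and all $t \leq T$ (probability of failure at most $2/T^2$ by Hoeffding and a union bound); on $C$, whenever a sub-optimal arm is pulled its gap is at most twice its confidence width at the time of its last pull, which yields $\sum_i \Delta_i N_{i,t} \leq 2\sqrt{2\ln T}\sum_i \sqrt{N_{i,t}} \leq 2\sqrt{2kt\ln T}$ by concavity of the square root. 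The two approaches buy slightly different things: yours is modular — it works as a black box on top of any $O(\ln T/\Delta_i^2)$ pull-count bound, which is exactly the style the paper itself uses to deduce Theorem~\ref{thm: instance independent regret} for \fucb{} by reduction to the $r=0$ case — while the paper's clean-event argument is self-contained, avoids the threshold optimization, and delivers the bound uniformly for every intermediate round $t \leq T$ rather than only at the horizon.
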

\begin{proof}
Recall from Section \ref{subsec:ucbProof} that the expected cumulative regret of the \ucb1 algorithm in any round $T$ is given by $$\mathbb{E}\big[\mathcal{R}_{\textnormal{\ucb}}(T)\big] = \sum_{i \ in [k]} \Delta_i \cdot \mathbb{E}[N_{i,T}].$$
To bound the above quantity, we begin by defining the event
$$C := \Bigg\{\big|\hat{\mu}_{i}(t) - \mu_{i}\big| \leq \smash{\sqrt{\frac{2\ln T}{N_{i,t}}}}, \forall i \in [k], \forall t \leq T\Bigg\}.$$
By applying Hoeffding's inequality, and taking union bound, we get
$$\mathbb{P}\big( \Bar{C}\big) \leq \frac{2kT}{T^4} \leq \frac{2}{T^2}.$$
Next, we will bound the value of $\mathbb{E}\big[\mathcal{R}_{\textnormal{\ucb}}(T)\big]$ by conditioning on $C$ and $\Bar{C}$. Let us first bound $\mathbb{E}\big[\mathcal{R}_{\textnormal{\ucb}}(T) \big| C\big]$. Assume the event $C$ holds and some arm $i_t \neq 1$ is pulled in round $t \in [T]$. Then, by definition of \ucb1 algorithm, we have $\Bar{\mu}_{1}(t) < \Bar{\mu}_{i}(t)$. Then,
\begin{align*}
    \mu_1 - \mu_{i_t} &\leq \mu_1 - \mu_{i_t} + \Bar{\mu}_{i}(t) -\Bar{\mu}_{1}(t)\\
    &= \big(\mu_1 -\Bar{\mu}_{1}(t)\big) + \big(\Bar{\mu}_{i}(t) - \mu_{i_t} \big)
\end{align*}
Since event $C$ holds, we have
$$\mu_1 - \Bar{\mu}_{1}(t) = \mu_1 - \hat{\mu}_{1}(t-1) - \sqrt{\frac{2\ln T}{N_{i,t-1}}} \leq 0.$$
and
$$\Bar{\mu}_{i}(t) - \mu_{i_t} = \hat{\mu}_{i_t}(t-1) - \mu_{i_t} + \sqrt{\frac{2\ln T}{N_{i_t,t-1}}} \leq 2\cdot \sqrt{\frac{2\ln T}{N_{i_t,t-1}}}.$$
Therefore, 
\begin{equation}
    \label{eq:instance independent delta}
    \mu_1 - \mu_{i_t} \leq 2\cdot \sqrt{\frac{2\ln T}{N_{i_t,t-1}}}
\end{equation}
Now, consider any arm $i \in [k]$ and consider the last round $t_i \leq t$ when this arm was last pulled. Since the arm has not been pulled between $t_i$ and $t$, we know $N_{i,t_i} = N_{i,t-1}$. Hence, applying the inequality in Equation \ref{eq:instance independent delta} to arm $i$ in round $t_i$, we get
$$\mu_1 - \mu_i \leq 2\cdot \sqrt{\frac{2\ln T}{N_{i,t-1}}}, \text{~for all} t \leq T$$.
Thus, the regret in $t$ rounds is bounded by 
$$\mathcal{R}(t) = \sum_{i \in [k]} \Delta_i \cdot N_{i,t} \leq 2\sqrt{2\ln T} \cdot \sum_{i \in [k]} \smash{\sqrt{N_{i,t}}}.$$
Square root is a concave function, and hence from Jensen's inequality, we obtain
$$\sum_{i \in [k]} \sqrt{N_{i,t}} \leq \sqrt{kt}.$$
Therefor, we have
$$\mathbb{E}\big[\mathcal{R}_{\textnormal{\ucb}}(T) \big| C\big] \leq 2\sqrt{2kt\ln T}.$$
Hence, the expected cumulative regret in $t$ rounds can be bounded as
\begin{align*}
    \mathbb{E}\big[\mathcal{R}_{\textnormal{\ucb}}(T) &= \mathbb{E}\big[\mathcal{R}_{\textnormal{\ucb}}(T) \big| C\big] \mathbb{P}(C) + \mathbb{E}\big[\mathcal{R}_{\textnormal{\ucb}}(T) \big| \Bar{C}\big] \mathbb{\Bar{C}}\\
    &\leq 2\sqrt{2kt\ln T} + t \cdot \frac{2}{T^2}\\
    &= O\big( \sqrt{kt\ln T} \big), ~~~\forall t \leq T
\end{align*}
Thus, the distribution-free regret bound of \ucb1 algorithm at some time $T$ is $O(\sqrt{T \ln T})$.
\end{proof}

\section{Omitted Proofs}
\label{sec:omitted}

\subsection{Distribution-free \rRegret bound for \fucb}
\InstanceIndependentRegret*
\begin{proof}
Recall from Definition \ref{def:rRegret} our expression for the \rRegret of a \FSMAB algorithm $\mathcal{A}$. We know,
\begin{align*}
    \mathbb{E}[\mathcal{R}_{\mathcal{A}}^{r}(T)] &= \sum_{i\in [k]}{\Delta_i \cdot \Big(\mathbb{E}[N_{i,T}] - \textnormal{max}\big(0,\floor{r_i \cdot T} - \alpha \big) \Big)}\\
    & \leq k + \sum_{i\in [k]}{\Delta_i \cdot \Big(\mathbb{E}[N_{i,T}] - \textnormal{max}\big(0,r_i \cdot T - \alpha \big) \Big)}
\end{align*}
Note that, given any instance with $k$ arms, $\mu = (\mu_1, \mu_2,$ $ \ldots, \mu_k)$, and a constant $\alpha \geq 0$,
\begin{align*}
    \mathbb{E}[\mathcal{R}_{\mathcal{A}}^{r}(T)] \leq& \max_{r_i \in [0,1]^k, \sum_{i\in[k]}r_i < 1} ~~~k~~~ +\\
    &~~~\sum_{i\in [k]}{\Delta_i \cdot \Big(\mathbb{E}[N_{i,T}] - \textnormal{max}\big(0,r_i \cdot T - \alpha \big) \Big)}\\
    \leq& ~~k + \sum_{i\in [k]}{\Delta_i \cdot \mathbb{E}[N_{i,T}]}
\end{align*}
The last inequality follows from the fact that $r_i \geq 0$ for all $i \in  k$, and $\alpha$ is a constant. This implies that the regret for any instance with given value of $r = (r_1,r_2,\ldots,r_k)$ is bounded by the regret of the same instance for $r_1 = r_2 = \ldots = r_k = 0$. But when, $r_1 = r_2 = \ldots = r_k = 0$, \FUCB is the same as \ucb1. Hence, from the distribution-free regret bound of \ucb1 (See Appendix \ref{appendix}), the result follows. Thus we can upper bound the distribution-free regret of \FUCB as $O(\sqrt{T\ln T})$.
\end{proof}

\subsection{Cost of Fairness in the \SMAB Problem}
\label{app:cost of fairness proof}

\thmAlphaRegret*
\begin{proof}
From Section \ref{sec:model}, Definition \ref{def:smab regret} we know that $\mathcal{R}_{\mathcal{A}}(T) =$ $\sum_{i\in [k]}{\Delta_i \cdot \mathbb{E}[N_{i,T}]}$ and hence, we can bound the expected regret of an algorithm by bounding the expected number of pulls of a sub-optimal arm. In particular, we want to bound the quantity $\mathbb{E}[N_{i,T}]$ for every sub-optimal arm $i \neq 1$. We do this by considering two cases dependent on how many times the arm $i$ has been pulled to satisfy the fairness constraint, i.e. on how large is the quantity $r_i \cdot T - \alpha$. 

\noindent \underline{Case 1: } Let $i \neq 1$ and $r_i \cdot T - \alpha \geq \frac{8 \ln T}{\Delta_i^2}$. Then\\
\begin{align*}
    \mathbb{E}[N_{i,T}] &\leq (r_i \cdot T - \alpha) + \sum_{t = 1}^T \mathbbm{1}\{i_t = i, N_{i,t-1} \geq r_i \cdot T - \alpha\}\\
    &\leq (r_i \cdot T - \alpha)  \\ +   \sum_{t=1}^\infty & \sum_{s_1=1}^t  \sum_{s_i=r_i \cdot T - \alpha}^t \mathbbm{1}{\Big\{ \hat{\mu}_{1,s_1}(t) + c_{t,s_1} \leq \hat{\mu}_{1,s_i}(t) + c_{t,s_i} \Big\}} \tag{Follows from Section \ref{subsec:ucbProof}}\\
\end{align*}
Since $(r_i \cdot T - \alpha) \geq \frac{8 \ln T}{\Delta_i^2}$, it follows from the proof of Theorem \ref{theorem: UCB} that $\mathbb{E}[N_{i,T}] \leq (r_i \cdot T - \alpha) + \Big( 1 + \frac{\pi^2}{3} \Big)$.

\noindent\underline{Case 2: }Let $i \neq 1$ and $r_i \cdot T - \alpha < \frac{8 \ln T}{\Delta_i^2}$\\
Then the proof of Theorem \ref{theorem: UCB} can be appropriately adapted to show that $\mathbb{E}[N_{i,T}] \leq \frac{8 \ln T}{\Delta_i^2} + \Big( 1 + \frac{\pi^2}{3} \Big)$. Hence
\begin{align*}
    r_i \cdot T - \alpha \leq \mathbb{E}[N_{i,T}] \leq \frac{8 \ln T}{\Delta_i^2} + \Big( 1 + \frac{\pi^2}{3} \Big)
\end{align*}
Then from the two cases discussed above, we can conclude that
\begin{align*}
    \mathcal{R}(T) &\leq \sum_{i \in S(T)} (r_i \cdot T - \alpha) \cdot \Delta_i + \sum_{\substack{i \notin S(T) \\ i \neq 1}}  \bigg( \frac{8 \ln T}{\Delta_i}\bigg) \\ &  + \sum_{i \in [k]}\big(1 + \frac{\pi^2}{3} \big) \cdot \Delta_i
\end{align*}
where $S(T) = \Big\{i \in [k] ~\big|~ r_i \cdot T - \alpha \geq \frac{8 \ln T}{\Delta_i^2}\Big\}$.
\end{proof}

\section{Horizon-aware Algorithms}
\label{sec:tAware}
\SetAlFnt{\small}

An algorithm that has access to time horizon $T$ and has to satisfy fairness constraints only at the end of $T$ rounds (and not uniformly at the end of all rounds) can trade-off fairness and regret more effectively. To see this, notice that in order to identify the best arm quickly it is important that an algorithm should explore the arms in the initial rounds. This observation along with Observation \ref{obs:observationOne} gives us that if the arms are pulled initially to satisfy the fairness constraints, the algorithm incurs no regret and at the same time learns the rewards from each arm. In other words the algorithm incurs no regret for first $T':= \sum_{i \in [k]} r_i \cdot T$ number of rounds. If  $r$ is such that the $T'$ is sufficient to explore each arm and find the best arm with high probability then one can pull the best arm for rest of the $T -T'$ rounds. Notice that now the fairness constraints are only satisfied after $T'$ rounds. Guided by this intuition we propose a \ucb1 based \emph{T-aware} algorithm called \TFUCB algorithm that satisfies the fairness requirement at the end of $T$ rounds and achieves logarithmic \rregret. 

\begin{algorithm}[h]
 \SetAlgoLined
 \KwIn{$[k], (r_i)_{i \in [k]}$}
 \textbf{Initialize:}\\
  $n_i \gets \max{ \big(1, r_i \cdot T}\big)$ for each $i \in [k]$\\
  $T' = \sum_{i \in [k]} n_i$ \\
  \For{$t = 1,2, \ldots, T'$} { 
    - Pull each arm $i \in [k]$ exactly $n_i$ times \\
  }
 \For{$t = T' + 1, \ldots, T$} { 
    - $i_t = \argmax_{i \in [k]} \Bar{\mu}_i(t)$ \\
    - Update $\Bar{\mu}_i(t+1)$ 
  }
  \caption{\tfucb}
  \label{alg:tfucb}
\end{algorithm}

\noindent \textbf{\ucb1 based Algorithm (\tfucb):}
This \TFUCB algorithm knows the time horizon $T$, and effectively separates the \emph{fairness constraint satisfaction} phase and the \emph{regret minimization} phase and achieves logarithmic \rRegret in terms of $T$ with dependence on the values of the fairness fractions.
\TFUCB is presented in Algorithm \ref{alg:tfucb}. Note that \TFUCB satisfies the fairness requirements of all arms at $T'$ itself, but does not provide uniform time fairness guarantee as \FUCB. Next we show that \TFUCB achieves logarithmic \rregret.
\begin{restatable}{theorem}{TFucbRegret}
\label{thm:TfucbRegret}
For \FSMAB problem, \tfucb ~has \rRegret $\mathcal{R}_{\tfucb}^{r}(T) = O(\ln T)$. In particular, its $r$-dependent regret is given by 
\begin{align*}
    \mathcal{R}_{\tfucb}^{r}(T) \leq& \Big(1 + \frac{\pi^2}{3} \Big) \cdot \sum_{i \in [k]}\Delta_i \\
    &+ \sum_{\substack{i \in S(T) \\ i \neq 1}}\Delta_i \cdot \bigg( \frac{8 \ln T}{\Delta_i^2} - r_i \cdot T \bigg)
\end{align*}
where $S(T) = \Big\{i \in [k] ~ \big|~ r_i \cdot T < \frac{8 \ln T}{\Delta_i^2}\Big\}$.
\end{restatable}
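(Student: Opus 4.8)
The plan is to bound the expected number of pulls $\mathbb{E}[N_{i,T}]$ of each sub-optimal arm $i \neq 1$ and substitute into the \rRegret expression from Definition \ref{def:rRegret} with the baseline $\max(0, \floor{r_i T})$, since \tfucb carries no tolerance parameter (equivalently, $\alpha = 0$). Because \tfucb front-loads exactly $n_i = \max(1, \floor{r_i T})$ deterministic pulls of each arm in its first phase ($t \leq T'$) and then runs \ucb1 on the rounds $t > T'$, the count $N_{i,T}$ decomposes arm-by-arm into a fixed phase-1 contribution plus a phase-2 (learning) contribution, and the entire difficulty lies in controlling the latter. I would split on the set $S(T) = \{i : r_i T < 8\ln T/\Delta_i^2\}$, mirroring the case analysis in the proof of Theorem \ref{theorem: FUCB regret} with the tolerance set to zero.

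For an arm $i \notin S(T)$ with $i \neq 1$, it already receives $n_i \geq 8\ln T/\Delta_i^2$ pulls in phase 1. I would write $N_{i,T} \leq n_i + \sum_{t>T'} \mathbbm{1}\{i_t = i,\, N_{i,t-1}\geq n_i\}$ and observe that whenever arm $i$ is selected in phase 2 the \ucb1 comparison $\Bar{\mu}_1(t) < \Bar{\mu}_i(t)$ forces one of the three events \eqref{eq:EventA}--\eqref{eq:EventC} from the proof of Theorem \ref{theorem: UCB}. Since $N_{i,t-1}\geq n_i \geq 8\ln T/\Delta_i^2 \geq 8\ln t/\Delta_i^2$ for every $t\leq T$, event \eqref{eq:EventC} has probability zero, while Hoeffding bounds events \eqref{eq:EventA} and \eqref{eq:EventB} by $t^{-4}$ each; summing $2t^{-4}$ then gives $\mathbb{E}[N_{i,T}] - \floor{r_i T} \leq 1 + \pi^2/3$. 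For an arm $i \in S(T)$ with $i \neq 1$, arm $i$ enters phase 2 with fewer than $8\ln T/\Delta_i^2$ pulls, so I would instead take the threshold $\ell = 8\ln T/\Delta_i^2$, write $N_{i,T} \leq \ell + \sum_{t>T'}\mathbbm{1}\{i_t=i,\, N_{i,t-1}\geq \ell\}$, and run the identical event-elimination argument to obtain $\mathbb{E}[N_{i,T}] \leq 8\ln T/\Delta_i^2 + 1 + \pi^2/3$.

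Substituting both bounds into $\mathcal{R}^r_{\tfucb}(T) = \sum_{i\in[k]} \Delta_i\big(\mathbb{E}[N_{i,T}] - \floor{r_i T}\big)$ and using $\Delta_1 = 0$ yields the stated instance-dependent bound; since $\sum_{i\neq 1}\Delta_i \cdot 8\ln T/\Delta_i^2 = O(\ln T)$ while the $-\,r_i T$ terms only decrease the bound, the $O(\ln T)$ conclusion follows. The step I expect to be the main obstacle is justifying that the \ucb1 tail argument of Theorem \ref{theorem: UCB} transfers to phase 2 even though the clock starts at $t = T'+1$ rather than $t=1$ and each arm already carries $n_i$ phase-1 samples. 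The two points to verify are that the phase-1 pulls are deterministic, so each reward average is still a mean of $n_i$ i.i.d. draws and Hoeffding applies unchanged, and that restricting the outer sum to $t > T'$ only deletes nonnegative terms, so the resulting tail is still dominated by the full series $\sum_{t\geq 1}\sum_{s_1=1}^{t}\sum_{s_i} 2t^{-4} \leq \pi^2/3$.
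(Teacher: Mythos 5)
Your proposal is correct and follows essentially the same route as the paper's proof: the same two-case split on whether $r_i T \geq 8\ln T/\Delta_i^2$ (i.e.\ on membership in $S(T)$), the same reduction of the phase-2 pulls to the \ucb1 three-event tail bound with threshold $\ell = r_iT$ or $\ell = 8\ln T/\Delta_i^2$ respectively, and the same substitution into the \rRegret definition. The extra care you take in justifying why the \ucb1 argument transfers to phase 2 --- the phase-1 pulls being deterministic so Hoeffding applies unchanged, and the outer sum starting at $t = T'+1$ only deleting nonnegative terms --- is precisely the adaptation the paper leaves implicit with ``can be appropriately adapted.''
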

\begin{proof}
Recall $\Bar{\mu}_{i}(t) = \hat{\mu}_{i,N_{i,t-1}}(t-1) + c_{t,N_{i,t-1}}$ is the UCB estimate of the mean of arm $i$, where $\hat{\mu}_{i,N_{i,t-1}}(t-1)$ is the empirical estimate of the mean of arm $i$ when it is played $N_{i,t-1}$ in $t-1$ rounds and $c_{t,N_{i,t-1}} = \sqrt{\frac{2\ln t}{N_{i,t-1}}}$ is the confidence interval of the arm $i$ at round $t$. Similar to the proof of Theorem \ref{theorem: UCB} (\ucb1 algorithm), we upper bound the expected number of times a sub-optimal arm is pulled. We do this for each sub-optimal arm by considering two cases dependent on the number of times the sub-optimal arm is pulled in the fairness constraint satisfaction phase, i.e. in the first $T'$ rounds. 
\newline
\underline{Case 1: } Let $i \neq 1$ and $r_i \cdot T \geq \frac{8 \ln T}{\Delta_i^2}$. Then\\
\begin{align*}
    \mathbb{E}[N_{i,T}] &\leq r_i \cdot T + \sum_{t = T'+1}^T \mathbbm{1}\{i_t = i, N_{i,t-1} \geq r_i \cdot T\}\\
    &\leq r_i \cdot T \\&+ \sum_{t=T'}^\infty\sum_{s=1}^t\sum_{s_i=r_i \cdot T}^t \mathbbm{1}{\Big\{ \hat{\mu}_{1,s}(t) + c_{t,s} \leq \hat{\mu}_{1,s_i}(t) + c_{t,s_i} \Big\}} \tag{Follows from Appendix \ref{sec:prelims}}\\
\end{align*}
Since $r_i \cdot T \geq \frac{8 \ln T}{\Delta_i^2}$, it follows from the proof of Theorem \ref{theorem: UCB} that $\mathbb{E}[N_{i,T}] \leq r_i \cdot T + \Big( 1 + \frac{\pi^2}{3} \Big)$. Hence, the expected number of pulls of a sub-optimal arm $i \neq 1$ in the regret minimization phase is $\mathbb{E}[N_{i,T}] - r_i \cdot T \leq \Big( 1 + \frac{\pi^2}{3} \Big)$. 

\noindent\underline{Case 2: }Let $i \neq 1$ and $r_i \cdot T < \frac{8 \ln T}{\Delta_i^2}$\\
Then the proof of Theorem \ref{theorem: UCB} can be appropriately adapted to show that $\mathbb{E}[N_{i,T}] \leq \frac{8 \ln T}{\Delta_i^2} + \Big( 1 + \frac{\pi^2}{3} \Big)$. Thus the expected number of pulls of a sub-optimal arm $i \neq 1$ in the regret minimization phase is 
\begin{align*}
    \mathbb{E}[N_{i,T}] - r_i \cdot T &\leq \frac{8 \ln T}{\Delta_i^2} + \Big( 1 + \frac{\pi^2}{3} \Big) - r_i \cdot T \\
    &\leq \hspace{0.5cm}\frac{8 \ln T}{\Delta_i^2} + \Big( 1 + \frac{\pi^2}{3} \Big)
\end{align*}
Suppose $S(T) = \Big\{i \in [k] ~\big|~ r_i \cdot T < \frac{8 \ln T}{\Delta_i^2}\Big\}$. Then from the two cases discussed above, we can conclude that
\begin{align*}
    \mathcal{R}_{\tfucb}^{r}(T) \leq& \big(1 + \frac{\pi^2}{3} \big) \cdot \sum_{i \in [k]}\Delta_i \\
    &+ \sum_{\substack{i \in S(T) \\ i \neq 1}}\Delta_i \cdot \bigg( \frac{8 \ln T}{\Delta_i^2}  - r_i \cdot T \bigg) 
\end{align*}

\noindent Hence, $\mathcal{R}_{\tfucb}^{r}(T) = O(\ln T)$.
\end{proof}


\section{Rationale for Simulation Parameters}
\label{app: simulation param}
For evaluating the performance of our algorithm, we perform experiments on synthetic data sets as this allows for finer control on the tuning of the parameters of the experiment. In particular, we consider the following two \FSMAB instances:\\

\noindent\textbf{Instance 1: Fairness vs. Regret}
\begin{itemize}
    \item Number of arms: $k = 10$
    \item Time horizon: $T = 10^6$
    \item Mean rewards: $\mu_1 = 0.8$ and $\mu_i = \mu_1 - \Delta_i$ where $\Delta_i = 0.01i$
    \item Fairness constraint: $r = (0.05)_{i \in [k]}$
\end{itemize}
\textbf{Instance 2: FUCB vs. LFG}
\begin{itemize}
    \item Number of arms: $k = 3$
    \item Time horizon: $T = 200$
    \item Mean rewards: $\mu = (0.7,0.5,0.4)$
    \item Fairness constraints: $r = (0.2,0.3,0.25)$
\end{itemize}

Note that a large value for the time horizon $T$ significantly increases the simulation time. On the other hand, a small value for $T$ does not capture the convergence of a \SMAB algorithm. In Instance 1, we choose a sufficiently large value of $T$ so that the convergence of \FUCB is captured. On the other hand, we use smaller value of $T$ in Instance 2 because it allows us capture the maximum fairness violation at each round more clearly.

Next, we use Instance 1 to evaluate the performance of the algorithms in terms of \rRegret, whereas Instance 2 is used to evaluate the fairness guarantee. First, we consider the number of arms chosen. If the number of arms is very small, a learning algorithm will correctly identify the optimal arm very soon. On the other hand, a larger number of arms increases the simulation time required to depict the convergence behaviour of the algorithm. Our choice of $k$ in Instance 1 is sufficient to depict the behaviour of our algorithm in terms of regret without significantly increasing the simulation time. 
On the other hand, we choose a small value of $k$ in Instance 2 to showcase the fairness guarantee. Keeping $k$ small allows us more flexibility in terms of choosing $r_i$'s (since $\sum_{i \in [k]}r_i < 1$). Thus, by choosing $r_i$'s of sub-optimal arms such that the number of times \FUCB is required to play these arms is significantly more than that by classic \ucb1, allows us to test the fairness guarantee of \fucb.

Our choice of the expected rewards of the arms, $\mu = (\mu)_{i \in [k]}$ in Instance 1 is such that the differences in the expected rewards of two adjacent arms is small. Consequently, the algorithm needs more time to correctly decide the optimal arm. Furthermore, we also carried out simulations with $\mu_i$'s with higher difference between them. However, our current choice of $\mu_i$'s captures the contrast in the regret performance of the \FSMAB algorithms much better. In contrast, the choice of $\mu$ in Instance 2 is because the fairness guarantee of \FUCB can be tested more rigorously when the differences in $\mu_i$'s is significant as this causes the algorithm to correctly identify the optimal arm quickly. In the standard \UCB algorithm, this would lead to the sub-optimal arms being pulled significantly fewer number of times. As a result, choosing greater values of $r_i$'s for these arms allows for more strict evaluation of the fairness guarantees of \fucb.

The fairness constraint vector $r = (r_i)_{i \in [k]}$ in Instance 1 is again selected such that it provides a clear depiction of the cost of fairness in terms of the conventional notion of regret. The choice of $r$ in Instance 2 allows for more meticulous assessment of the fairness guarantees of the two algorithms. We have also carried out the experiments with different values of the fairness constraint vector but our choice turns out to be the one suitable for the purpose of representation.

\end{document}